\def\ps@pprintTitle{%
  \let\@oddhead\@empty
  \let\@evenhead\@empty
  \def\@oddfoot{\reset@font\hfil\thepage\hfil}
  \let\@evenfoot\@oddfoot
}
\newtheorem{definition}{Definition}
\newtheorem{theorem}{Theorem}
\newtheorem{lemma}{Lemma}
\newtheorem{remark}{Remark}
\DeclarePairedDelimiter{\abs}{\lvert}{\rvert}
\DeclarePairedDelimiter{\norm}{\lVert}{\rVert}
\DeclarePairedDelimiterX{\infdivx}[2]{(}{)}{#1\;\delimsize\|\;#2}
\newcommand{\veccol}[1]{\left(\begin{smallmatrix}#1\end{smallmatrix}\right)}
\newcommand{\eqdef}{\stackrel{\mathclap{\normalfont\tiny\mbox{def}}}{=}}
\newcommand{\KL}{KL\infdivx}
\newcommand*{\R}{\mathbb{R}}
\journal{Neural Networks}
\begin{document}

\begin{frontmatter}

\title{Learning Domain Invariant Representations by Joint Wasserstein Distance Minimization}
  
\author[1,3]{L\'eo~And\'eol}
\ead{leo@andeol.eu}
\author[2,4]{Yusei~Kawakami}
\ead{kawakami-yusei@fujitsu.com}
\author[4,5]{Yuichiro~Wada}
\ead{wada.yuichiro@fujitsu.com}
\author[2,5]{Takafumi~Kanamori\corref{cor1}}
\ead{kanamori@c.titech.ac.jp}
\author[1,3,6,7,8]{Klaus-Robert~M\"uller\corref{cor1}}
\ead{klaus-robert.mueller@tu-berlin.de}
\author[9,1,3]{Gr\'egoire~Montavon\corref{cor1}}
\ead{gregoire.montavon@fu-berlin.de}

\cortext[cor1]{Corresponding authors}
\address[1]{Machine Learning group, Technische Universität Berlin, 10587 Berlin, Germany}
\address[2]{Tokyo Institute of Technology, Tokyo, Japan}
\address[3]{Berlin Institute for the Foundations of Learning and Data -- BIFOLD, 10587 Berlin, Germany}
\address[4]{Fujitsu Laboratories Ltd., Japan}
\address[5]{RIKEN AIP, Japan}
\address[6]{Max Planck Institute for Informatics, Stuhlsatzenhausweg 4, 66123 Saarbr\"ucken, Germany}
\address[7]{Department of Artificial Intelligence, Korea University, Seoul 136-713, South Korea}
\address[8]{Google Deepmind, Berlin, Germany}
\address[9]{Department of Mathematics and Computer Science, Freie Universit\"at Berlin, 14195 Berlin, Germany\vspace{-.5cm}}

\begin{abstract}
Domain shifts in the training data are common in practical applications of machine learning; they occur for instance when the data is coming from different sources. Ideally, a ML model should work well independently of these shifts, for example, by learning a domain-invariant representation. However, common ML losses do not give strong guarantees on how consistently the ML model performs for different domains, in particular, whether the model performs well on a domain at the expense of its performance on another domain. In this paper, we build new theoretical foundations for this problem, by contributing a set of mathematical relations between classical losses for supervised ML and the Wasserstein distance in joint space (i.e.\ representation and output space). We show that classification or regression losses, when combined with a GAN-type discriminator between domains, form an upper-bound to the true Wasserstein distance between domains. This implies a more invariant representation and also more stable prediction performance across domains. Theoretical results are corroborated empirically on several image datasets. Our proposed approach systematically produces the highest minimum classification accuracy across domains, and the most invariant representation.
\end{abstract}


\end{frontmatter}

\section{Introduction}

Learning from data that originates from different provenances representing the same physical observations occurs rather commonly, but it is nevertheless a highly challenging endeavor. 
These multiple data sources may e.g.\ originate from different users,  acquisition devices, geographical locations, they may encompass batch effects in biology, or they may come from the same measurement devices that each are calibrated differently.    
Because the source of the data itself is typically not task-relevant, a learned model is therefore required to be {\em invariant across domains}. A valid strategy for achieving this is to learn an invariant intermediate representation (illustrated in Figure \ref{fig:intro}). Furthermore, in certain applications, privacy requirements such as anonymity dictate that the source should not be recoverable from the representation. Hence, building a domain invariant representation can also be a desideratum {\em by itself}.

Domain invariance, in some contexts referred to as subpopulation shift \cite{koh2020wilds} or distributional shifts \cite{DBLP:journals/corr/AmodeiOSCSM16,goel2020model}, can be contrasted to two related and well-researched areas that are \textit{domain adaptation} (DA) \cite{shimodaira2000improving, sugiyama2007covariate} and \textit{domain generalization} (DG) \cite{dou2019domain, zhou2020domain}. Domain adaptation is mainly concerned with the model performance on the (unlabeled) target domain, often at the expense of incurring more errors on the (labeled) source domain. Domain generalization, on the other hand, aims to build a ML model that generalizes across \textit{all} domains, including unseen ones. This generality imposes additional constraints on the solution, that can hamper the careful enforcement of invariance w.r.t.\ the domains at hand. In comparison, \textit{domain invariance} (DI), our focus in this paper, considers that the ML model is trained and applied on a finite and given set of domains, and each domain is treated equally. The objective is to learn a model whose performance is well-balanced over the multiple given domains. The differences are highlighted graphically and with equations in Figure \ref{fig:comparison-tasks}. Hence, we address a singular and important problem, which has so far received little attention, especially in the context of deep learning models.

In order to address domain invariance, we consider in the present work the \textit{Wasserstein distance} \citep{peyre2019computational,villani2008optimal} as it characterizes the weak convergence of measures and displays several advantages over e.g.\ the more common Kullback-Leibler divergence, as discussed in \citep{arjovsky2017wasserstein,DBLP:conf/nips/MontavonMC16}.
We contribute several bounds relating the Wasserstein distance between the joint distributions of two or more domains, and the objective function of practical supervised neural networks.  This theoretical basis supports the rigorous learning of domain-invariant classifiers through the incorporation of a GAN-type discriminator between domains (or domain critic) as an auxiliary task\footnote{Anecdotally, the use of a domain critic makes our method relate to works on domain adaptation such as DANN \cite{ganin2016domain} and WDGRL \cite{shen2017wasserstein}.}. With the proposed theoretical grounding, one can show that (1) the Wasserstein distance between the different domains is systematically reduced as an effect of training, and (2) the prediction performance gap between domains is also reduced as a result.

\begin{figure}[t!]
    \begin{minipage}[t]{.32\textwidth}
    \centering {\footnotesize \sffamily domain adaptation}\\
    \includegraphics[width=\textwidth]{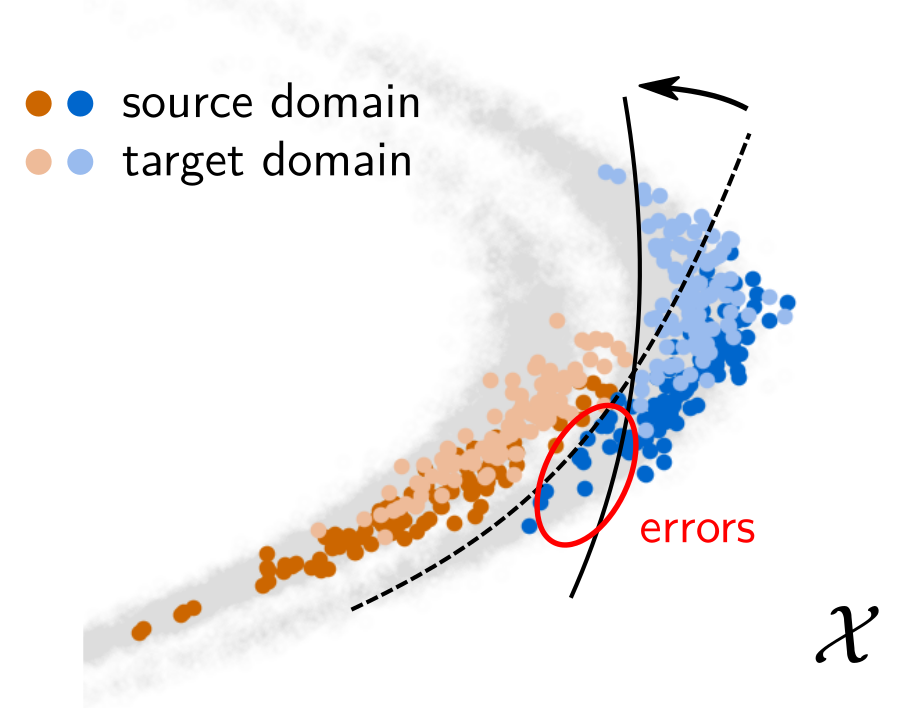}
    $$\mathbb{E}_{(x,y)\sim\mathcal{P}_\text{target}} \mathcal{L}(f(x), y)$$
    \end{minipage} \hfill \vrule \hfill
    \begin{minipage}[t]{.32\textwidth}
    \centering {\footnotesize \sffamily domain invariance (this paper)}\\
    \includegraphics[width=\textwidth]{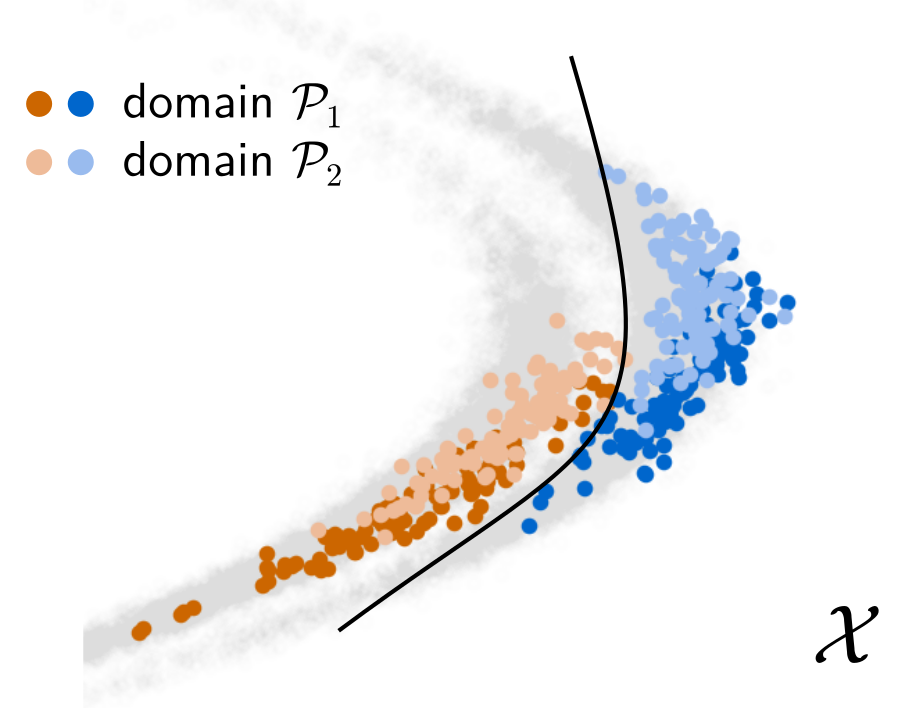}
    $$\frac{1}{n_{d}}\sum\limits_{i = 1}^{n_{d}}\mathbb{E}_{(x,y)\sim\mathcal{P}_i} \mathcal{L}(f(x), y)$$
    \end{minipage} \hfill \vrule \hfill
    \begin{minipage}[t]{.32\textwidth}
    \centering {\footnotesize \sffamily domain generalization}\\
    \includegraphics[width=\textwidth]{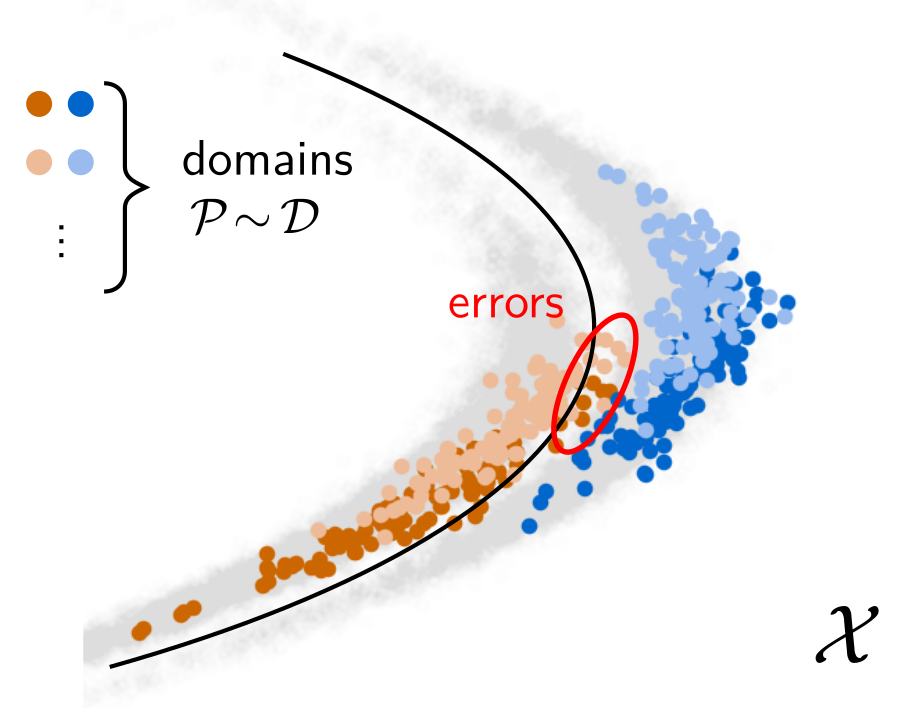}
    $$\mathbb{E}_{\mathcal{P}\sim\mathcal{D}}\mathbb{E}_{(x,y)\sim\mathcal{P}} \mathcal{L}(f(x), y)$$
    \end{minipage}
    \caption{Visual overview of the differences between domain adaptation, domain invariance and domain generalization in the context of classification. $\mathcal{X}$ denotes the input domain, and $\mathcal{P}$ denotes the various probability distributions. \textit{Domain adaptation} learns a classifier that matches the target domain ($\mathcal{P}_\mathrm{target}$) using information from the source domain, irrespective of its performance on it (errors as circled in red). \textit{Domain invariance} treats each of the $n_d$ domains equally and aims to build domain invariant representations and therefore a predictor that works equivalently well on each of them. \textit{Domain generalization} addresses the more complex task of building a classifier that performs well on any domain drawn from some distribution $\mathcal{D}$ (including unseen ones, here depicted in gray). This is done potentially at the cost of giving up some accuracy on the few given domains (errors on the two domains of interest are circled in red).}
    \label{fig:comparison-tasks}
\end{figure}


Furthermore, a significant part of the novelty of our work lies in contributing a formalism, which makes our theory applicable to \textit{partially labeled} distributions. This allows us in particular to cover both supervised and semi-supervised learning scenarios. While a few other works also addressed the scenario where domains are partially labeled, they focus on the related but distinct problems of domain adaptation \citep{cheng2014semi,lopez2013semi,he2020classification} and domain generalization \citep{sharifi2020domain}.

Our proposed approach is tested empirically on three domain invariance benchmarks: MNIST vs.\ SVHN, and the multi-domain Office-Caltech and PACS datasets. Results confirm our theoretical analysis, in particular, we find that our approach yields \textit{highly invariant} representations, and that the latter support predictions that are accurate on \textit{all} domains, including the most difficult ones. Lastly, we inspect the learned invariant representation using UMAP embeddings \cite{mcinnes2018umap} and `explainable AI' (cf.\ \cite{DBLP:conf/kdd/Ribeiro0G16,samek2021}). This allows us to visually highlight how the data distributions associated to each domain merge into a single distribution under the effect of the training objective. It also allows us to explore which input features are used to map the data into the desired invariant representation \cite{liu2019transferable}. Interestingly, we find that recognizing and exploiting \textit{domain-specific} features remains in fact an integral part of the neural network strategy to arrive ultimately at the desired invariant representation.

\begin{figure}[t!]
    \centering
    \includegraphics[width=\textwidth]{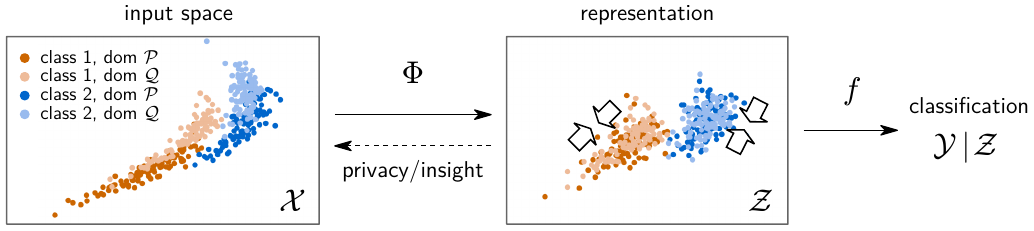}
    \caption{Illustration of the problem of domain invariance in the case of classification. We would like to learn a function $\Phi$ that maps the data to a representation where the domains cannot be differentiated, and from which a domain-invariant classifier $f$ can be built. The invariant representation induced by this model can serve further purposes such as domain privacy or extraction of domain-related insights. $\mathcal{X}$, $\mathcal{Z}$, $\mathcal{Y}$ correspond to the input, representation and target (label) space respectively.}
    \label{fig:intro}
\end{figure}

\section{Related work}

Significant research in machine learning and statistics has been dedicated to the question of distributional shifts (between training and/or test distributions) \cite{DBLP:journals/corr/AmodeiOSCSM16, delage2010distributionally}. This has resulted in a variety of machine learning formulations that can be broadly categorized into domain adaptation, domain generalization, and domain invariance.

\subsection{Domain Adaptation} 

Domain Adaptation \cite{ben2010theory} has been studied due to the fact that in real-world situations, when the source and target distributions differ, for instance by a covariate shift \cite{shimodaira2000improving, sugiyama2007covariate}, models trained on the source distribution perform significantly worse on the target. Domain adaptation has two major well-studied settings: Unsupervised Domain Adaptation and Semi-Supervised Domain Adaptation.

Unsupervised Domain Adaptation considers the situation where the source domain has labels but the target domain has not (cf.\ \cite{zhang2022transfer} for a review). Using the theoretical framework of \cite{ben2010theory} where the target error is upper-bounded by the error on the source domain, the divergence between marginal distribution of the two domains and a constant term, \cite{ganin2016domain, shen2017wasserstein} propose a domain adaptation technique based on the minimization of this upper-bound. The joint distribution optimal transportation (JDOT) method of \cite{courty2017joint} is similar to \cite{shen2017wasserstein}, but it aims to minimize the Wasserstein distance between \textit{joint} distributions. Note that \cite{courty2017joint} does not use adversarial learning and instead solves the primal form of the optimal transport problem, and relies on a single-domain classifier to learn on the target domain with transported source-domain labels. The approach of \cite{ganin2016domain} has been extended to a number of subsequent domain adaptation methods \cite{zhang2018collaborative, shu2018dirt, xu2020adversarial}.
Other metrics than the Wasserstein distance can be used to align domains including the MMD, in works such as \cite{zhang2021joint} that use pseudo labels for unlabeled data, with a manifold regularization; or the Bures-Wasserstein distance (a specific case of the Wasserstein distance on normal distributions) such as in \cite{liu2022bures}.
The method of \cite{xiao2021dynamic}, inspired by \cite{saito2018maximum}, considers the alignment between domains and the class discriminability simultaneously, and proposes to weight these two terms in the objective in a dynamic manner. Although departing from existing theoretical frameworks, it achieves state-of-the-art empirical performance.

Semi-Supervised Domain Adaptation assumes a setting where there are a well-labeled source domain and a partially-labeled target domain. Observing that a few target labels can greatly improve task performance in applications such as object detection and image recognition, Semi-Supervised Domain Adaptation has recently attracted attention. The methods in \cite{saito2019semi, qin2021contradictory} correct the classifier's predictions that are biased to the large amount of labeled data in the source domain by using conditional entropy computed from its predictions. In \cite{li2021ecacl, kim2020attract, jiang2020bidirectional}, the input data is perturbed by a powerful data augmentation (e.g.\ \cite{cubuk2020randaugment, devries2017improved}) or adversarial method (e.g.\ \cite{miyato2018virtual}), and then the model is trained so that the predictions for the original input and the perturbed input are consistent. Reference \cite{yang2021deep} proposes an efficient method for training a model by assigning pseudo one-hot labels to unlabeled target data predicted with high confidence during training. The methods presented above achieve good results in numerical experiments, but do not provide a rigorous theoretical discussion of the generalization error. In particular, \cite{saito2019semi} minimizes an upper-bound of the target error, but that upper-bound contains the joint minimum error that cannot be optimized, and therefore it is not guaranteed that the target error will necessarily be small after training. One such exception is JDIP \cite{chen2020domain}, which conducts a theoretical study with some similarities to ours, however in the case of domain adaptation, and with the classical L2 distance instead of a metric on distributions (the Wasserstein distance, with its advantages). This method builds on linear transformations and kernels models, whereas our approach works alongside more powerful and flexible neural networks.

Note that on both domain adaptation settings, the main goal is to improve generalization performance in the target domain, often at the expense of performance in the source domain. In our work, we would like to have high performance in \textit{all} given domains, and this task is better addressed using domain invariance.

\subsection{Domain Generalization}

Domain Generalization \cite{blanchard2011generalizing, muandet2013domain, zhou2022domain} is a very challenging problem that aims to achieve high performance on unseen target domains by learning models from multiple fully-labeled source domains. Domain generalization has received significant attention recently. Reference \cite{li2018domain} combines an adversarial loss with a maximum mean discrepancy regularizer in order to extract a representation where domains are aligned. The method of \cite{li2018deep} uses two adversarial losses to take advantage of label information in fully-labeled domains. The first loss matches the latent representation for each class, and the second loss reduces the negative effects of differences in class distributions across domains. The method of \cite{dou2019domain} uses meta-learning in order to extract features that are consistent across domains. Reference \cite{zhou2020domain} starts from an adversarial approach and incorporates a metric learning loss into the classifier in order to improve classification boundaries.
Reference \cite{meng2022attention} introduces a new \textit{attention diversification} framework, based on attention maps, where the latter are trained to produce diversified responses for task-related features and to remove domain specific features. The approach of \cite{zhou2021domain} mixes instance-level feature statistics across source domains. Mixing styles of training data has the effect of creating pseudo-new domains, resulting in increased diversity of training domains and improved generalization capability to unseen domains. The method of \cite{li2021divergence} can address unsupervised domain adaptation and model adaptation (or source-free unsupervised domain adaptation) as well as domain generalization. The method generates adversarial attacks to the extent that semantic information of original data is retained, and then learns to reduce the classification loss for those adversarial examples.

A few works have focused on theoretical aspects of Domain Generalization. Reference \cite{li2020domain} develops theoretical arguments based on a strong assumption that the distribution of latent variables in all domains is represented by a linear combination of other domains. Reference \cite{albuquerque2019generalizing} shows an upper-bound theorem indicating that minimizing the divergence between the source marginal distributions like \cite{ganin2016domain, shen2017wasserstein} can minimize the unseen target error when the target distribution exists in the neighborhood of the convex hull of source distributions. However, it is also known that minimizing the divergence to an extremely small value increases the divergence between the target distribution and the convex hull, which leads to an increase in the upper-bound. \cite{sicilia2021domain} derives a tighter upper-bound of the target error than \cite{albuquerque2019generalizing}. Note that the negative effect resulting from minimizing the divergence remains unresolved.

While domain generalization is a challenging and interesting topic on its own, it differs from the setting we consider in the present paper by requiring the model to generalize well to \textit{any} new domain. Not only this makes the theoretical analysis significantly harder than for the finite domain setting, performance gains on new unseen domains are often obtained at the expense of the existing domains, which are in our case the domains of interest.

\subsection{Domain Invariance}

In contrast to domain adaptation and domain generalization, domain invariance is a comparatively less explored setting. Domain invariance shares some technical similarities with works in distributionally robust optimization \cite{rahimian2019distributionally, duchi2018learning, duchi2020distributionally}. These works however focus on the optimization problem and its theoretical properties rather than the problem of generalization between different groups or domains. Another related area is subpopulation shifts, which addresses the question of generalization across predefined subgroups (e.g.\ \cite{sagawa2019distributionally, goel2020model, koh2020wilds}). Unlike domain invariance, works on subpopulation shifts focus on building invariance to subgroups of the same domain, often numerous with few samples and small discrepancies, rather than producing invariance to qualitatively different domains. Furthermore, works on subpopulation shifts typically consider that the different subgroups are fully labeled, whereas the domain invariance framework we introduce in our work enables learning with domains that are partially labeled. Due to the limited previous works and the lack of reference methods for domain invariance, our experiments section will resort to ablation studies for comparison.

\section{Domain Invariance and Optimal Transport}
\label{section:background}

Domain invariance can be described as the property of a representation to be indistinguishable with regards to its original domain, in particular, the multiple data distributions projected in representation space should look the same (i.e.\ have low distance). A recently popular way of measuring the distance between two distributions is the Wasserstein distance. The latter can be interpreted as the cost of transporting the probability mass of one distribution to the other if we follow the optimal transport plan, and it can be formally defined as follows:
\begin{definition}
Let $\mathcal{P}\in \mathcal{M}_{+}^1(\mathcal{A}), \mathcal{Q}\in \mathcal{M}_{+}^1(\mathcal{B})$ be two arbitrary probability distributions defined over two measurable metric spaces $\mathcal{A}$ and $\mathcal{B}$. Let $c$ be a cost function. Their Wasserstein distance is:
\begin{equation}
    W(\mathcal{P},\mathcal{Q}) \eqdef \inf\limits_{\pi \in \Pi(\mathcal{P},\mathcal{Q})} \int_{\mathcal{A}\times\mathcal{B}}c(a,b)d\pi(a,b)
\label{eq:primal}
\end{equation}
with $\Pi(\mathcal{P},\mathcal{Q})\eqdef\{\pi \in \mathcal{M}_{+}^1(\mathcal{A}\times\mathcal{B}):P_{\mathcal{A}\#}\pi = \mathcal{P} \text{ and } P_{\mathcal{B}\#}\pi = \mathcal{Q}\}$, where $\bf{P}_{\mathcal{A}\#}$ and $\bf{P}_{\mathcal{B}\#}$ are push-forwards of the projection of $P_{\mathcal{A}}(a,b) = a$ and $P_{\mathcal{B}}(a,b) = b$. This can be loosely interpreted as $\Pi(\mathcal{P},\mathcal{Q})$ being the set of joint distributions that have marginals $\mathcal{P}$ and $\mathcal{Q}$.
\end{definition}
Hence, we measure the invariance of representations by how low the Wasserstein distance is between the distributions $\mathcal{P}$ and $\mathcal{Q}$ associated to the two domains. The $\mathcal{P}$ and $\mathcal{Q}$ distributions respectively are the $\mathcal{P}_1$ and $\mathcal{P}_2$ distribution of Fig.\ \ref{fig:comparison-tasks}. The Wasserstein distance being scale-dependent, we assume that representations of both domains have fixed scale. In comparison to other common alternatives such as the Kullback-Leibler divergence, the Jensen-Shannon divergence, or the Total Variation distance, the Wasserstein distance has the advantage of taking into account the metric of the representation space (via the cost function $c(a,b)$), instead of looking at pure distributional overlap, and this typically leads to better ML models \cite{DBLP:conf/nips/MontavonMC16,arjovsky2017wasserstein}. Computing the Wasserstein distance with Eq.\ \eqref{eq:primal} is expensive. Luckily, if we use the metric of our space as a cost function, such as the Euclidean distance $c(a,b)=\norm{a-b}_2$, we can derive a dual formulation of the 1-Wasserstein distance as follows:
 \begin{equation}
     W(\mathcal{P},\mathcal{Q}) = \sup\limits_{\norm{\varphi}_{Lip} \leq 1} \mathbb{E}_{\mathcal{P}}[\varphi] - \mathbb{E}_{\mathcal{Q}}[\varphi]
     \label{eq:wlipschitz}
 \end{equation}
where $\mathbb{E}_\mathcal{P}[\varphi]$ is the expected value of function $\varphi$ on the distribution $\mathcal{P}$. This formulation replaces an explicit computation of a transport plan, by a function to estimate, a task particularly appropriate for neural networks. Recently several methods have used this approach to learn distributions \cite{DBLP:conf/nips/MontavonMC16} specifically in the context of Generative Adversarial Networks \cite{arjovsky2017wasserstein,shen2017wasserstein}. The main constraint lies in the necessity of the function $\varphi$, which we will call the discriminator (or critic), to be 1-Lipschitz. A few approaches were proposed to tackle this problem, such as gradient clipping \cite{arjovsky2017wasserstein}, gradient penalty \cite{gulrajani2017improved} and more recently the Spectral Normalization \citep{miyato2018spectral}. It is however important to note that in practice the set of possible discriminators will be a subset of 1-Lipschitz continuous functions.

\section{Relating Wasserstein Distance to Supervised Losses}
\label{section:theory}

We would like to align the predicting behavior of a ML model on multiple domains following the approach illustrated in Figure \ref{fig:intro}, i.e.\ by learning a \textit{domain-invariant representation}. Specifically, we aim for a representation of data where the distributions associated to the two domains have minimum Wasserstein distance and therefore cannot be distinguished. At the same time, the representation should contain the features that are necessary to solve the given prediction task, e.g.\ using common supervised loss functions. We focus here on the two-domain case, and refer to Supplementary Note D for the case of three or more domains.

Let us start with some formalities: We denote by $\mathcal{X}$ the input space, by $\mathcal{Z} \subset \R^d$ our representation or feature space, and by $\mathcal{Y}$ the label or target space. We further denote by $\Phi:\mathcal{X} \to \mathcal{Z}$ the feature extractor, and by $f:\mathcal{Z} \to \mathcal{Y}$ the prediction function (e.g.\ regression; classification). We assume $\mathcal{Z}$ and $\mathcal{Y}$ to be compact measurable spaces, and we denote by $\mathcal{M}^1_{+}(\mathcal{Z}\times\mathcal{Y)}$ the set of probability distributions defined on their product space. Let  $\mathcal{P}^t, \mathcal{Q}^t \in \mathcal{M}^1_{+}(\mathcal{Z}\times\mathcal{Y)}$ be the \emph{true} probability distributions formed by the two domains we would like to align. When necessary, we add a subscript to these distributions to specify their support.

Similarly to previous works, domain alignment will be measured as the Wasserstein distance $W(\mathcal{P}^t, \mathcal{Q}^t)$ of samples embedded in \emph{feature space}, but also including their labels. We contribute by showing that the Wasserstein distance $W(\mathcal{P}^t, \mathcal{Q}^t)$, can be related formally to common loss functions used in classification or regression, via mathematical inequalities. With these inequalities one can design practical learning objectives fairly easily, whose minimization not only solves the task at hand, but also implies as a side effect the minimization of the Wasserstein distance between distributions of the two domains, thereby achieving domain invariance.

From a technical standpoint, our novel approach draws some inspiration from \cite{courty2017joint} and is based on measure theory, in order to formalize partially labeled distributions and therefore our problem of aligning multiple joint distributions. The individual steps are presented in Sections \ref{section:ssup}--\ref{section:crossentropy}, and we provide an overview of our novel theoretical framework in \mbox{Figure \ref{fig:theory}}.

\begin{figure}[b!]
    \centering
    \includegraphics[width=.9\columnwidth]{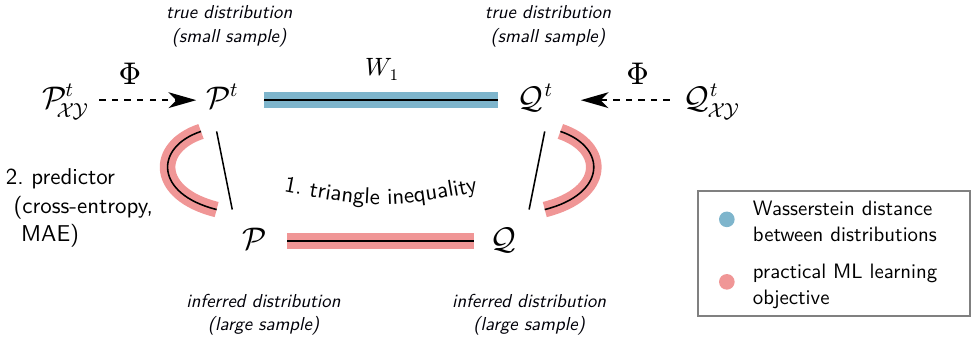}
    \caption{Visual overview of our theoretical framework. It relates the Wasserstein distance between the joint distributions $\mathcal{P}^t,\mathcal{Q}^t$ of each domain (blue) to components of a practical ML objective (red) in two steps. Step 1: The true joint distributions (of which only a small sample is observable) can be related via the triangle inequality to inferred distributions $\mathcal{P},\mathcal{Q}$ where missing labels are \textit{predicted} from the features. Step 2: The expanded terms can be further upper-bounded by common terms of a ML objective (cross-entropy, mean absolute error (MAE), etc.).}
    \label{fig:theory}
\end{figure}

\subsection{Incorporating Semi-Supervised Data}
\label{section:ssup}

Computation of the true Wasserstein distance $W(\mathcal{P}^t,\mathcal{Q}^t)$ would require knowledge of the true distributions $\mathcal{P}^t$ and $\mathcal{Q}^t$. In practice, we only have a finite sample of these distributions, and the quality with which the Wasserstein distance can be approximated largely depends on the amount of labeled data available. (For high-dimensional tasks, the necessary amount of labels would be overwhelming.) However, in practice, it is common that \textit{unlabeled data} is available in much larger quantity than the labeled data. We consider this semi-labeled scenario where only a fraction of data are obtained from  $\mathcal{P}^t,\mathcal{Q}^t$ (i.e.\ labeled). The remaining data are unlabeled and obtained from the marginals $\mathcal{P}^t_\mathcal{Z},\mathcal{Q}^t_\mathcal{Z}$.

By learning an appropriate function $f: \mathcal{Z} \rightarrow \mathcal{Y}$, say, a neural network classifier, that infers labels from features, one can obtain an approximation $\mathcal{P}^f=(z, f(z))_{z\sim \mathcal{P}^t_\mathcal{Z}}$ of the true joint distribution $\mathcal{P}^t$. This implies that the distribution we effectively have access to (and draw from) in practice is the mixture:
$$
\mathcal{P}=\alpha \mathcal{P}^t + (1 - \alpha) \mathcal{P}^f.
$$
where $\alpha \in(0,1)$ is the fraction of labeled data. We will refer to $\mathcal{P}$ as the \textit{inferred} distribution. Identically for the second domain, we construct an appropriate function $g: \mathcal{Z} \rightarrow \mathcal{Y}$ from which one can predict labels, and which results in another inferred distribution:
$$
\mathcal{Q}=\beta \mathcal{Q}^t + (1 - \beta) \mathcal{Q}^g.
$$
Note that the functions $f$ and $g$ need not be identical. Also, $\beta$ may differ from $\alpha$, which addresses the case where different domains have different proportions of labeled data.
Let the Wasserstein distance's cost function $c$ be the metric on the space $\mathcal{Z}\times\mathcal{Y}$. By application of the triangle inequality, the following relation can be extracted: 
\begin{equation}
    W(\mathcal{P}^t, \mathcal{Q}^t) \leq W(\mathcal{P}^t, \mathcal{P}) + W(\mathcal{P}, \mathcal{Q}) + W(\mathcal{Q}, \mathcal{Q}^t).
    \label{eq:triangle}
\end{equation}
In other words, the distance between inferred distributions $W(\mathcal{P}, \mathcal{Q})$ to which we add the inference `errors', i.e.\ the distance between true and inferred distributions on each domain, form an upper-bound to the true distance between distributions. Let us now analyze the error term $W(\mathcal{P}^t, \mathcal{P})$. We consider the case of $\mathcal{P}$ (analogously so for $\mathcal{Q}$).
\begin{lemma}
\label{lemma:wasserstein_mixture_decomposition}
Let $\mathcal{K}\in\mathcal{M}_{+}^1(\mathcal{Z}\times\mathcal{Y})$ be an arbitrary probability distribution, we then have
$ W(\mathcal{P},\mathcal{K})\leq \alpha W(\mathcal{P}^t, \mathcal{K}) + (1-\alpha)W(\mathcal{P}^f, \mathcal{K}),
$ and for the special case where $\mathcal{K}=\mathcal{P}^t$, we get
\begin{equation}
    W(\mathcal{P},\mathcal{P}^t) = (1-\alpha)W(\mathcal{P}^f, \mathcal{P}^t).
    \label{eq:lemma1-specific}
\end{equation}
\end{lemma}
A proof can be found in Supplementary Note A. The proof proceeds by first decomposing $\mathcal{P}$ into its mixture components, and then applying Jensen's inequality on the Wasserstein dual's supremum. For the special case of Eq.\ (4), the equality is due to $\mathcal{K}$ being an element of the mixture $\mathcal{P}$ and the Wasserstein distance of $\mathcal{K}$ to that mixture element being consequently zero.

Finally, combining the results above, that is, by applying the triangle inequality, Lemma \ref{lemma:wasserstein_mixture_decomposition}, and using the symmetry property of the Wasserstein distance, one can obtain another bound on the true Wasserstein distance, where unlike Eq.\ \eqref{eq:triangle}, some mixture components now appear explicitly:
\begin{theorem}
\label{th_final}
Given the Wasserstein distance's cost function $c$ is the metric on the product space $\mathcal{Z}\times\mathcal{Y}$, we get
\begin{equation}
    W(\mathcal{P}^t, \mathcal{Q}^t) \leq  (1-\alpha)W(\mathcal{P}^t,\mathcal{P}^f)
    + W(\mathcal{P},\mathcal{Q}) +  (1-\beta)W(\mathcal{Q}^t,\mathcal{Q}^f).
    \label{eq:theorem}
\end{equation}
\end{theorem}
This final formulation will let us relate in Section \ref{section:crossentropy} some of the expanded terms, specifically, the distances $W(\mathcal{P}^t,\mathcal{P}^f)$ and $W(\mathcal{Q}^t,\mathcal{Q}^f)$ to common loss functions used in supervised machine learning.

\subsection{Connection to Supervised ML Losses}
\label{section:crossentropy}

Various loss functions have been proposed for supervised learning. They address the diversity of output types (e.g.\ class labels; regression targets) and statistical properties of the data (e.g.\ margin between classes; presence/absence of outliers). Ideally, one would be able to achieve domain invariance while retaining the ability to optimize the most suitable loss function for a given problem.

\smallskip

Let us start with Eq.\ (5) in Theorem \ref{th_final}, in particular, the distance $W(\mathcal{P}^t,\mathcal{P}^f)$. The latter essentially measures the level of error with which the function $f$ predicts the true labels $y$. It therefore plays a similar role to common loss functions used for supervised machine learning. Both can also be mathematically related. First, the Mean Absolute Error (MAE) commonly used in robust regression can be related to $W(\mathcal{P}^t,\mathcal{P}^f)$ as follows:
\begin{lemma}
\label{lemma:mse_bound} 
\begin{equation}
    \begin{aligned}
W(\mathcal{P}^{t}, \mathcal{P}^{f}) &\leq\mathbb{E}_{(z,y) \sim \mathcal{P}^t}\big[|y-f(z)|\big]
\end{aligned}
\end{equation}
\end{lemma}
The proof is given in Supplementary Note A. In the case of classification, a similar result can be provided for the Kullback-Leibler (KL) divergence, which is equivalent to the cross-entropy loss when $\mathcal{P}_{\mathcal{Y}|z}$ is deterministic:
\begin{lemma}
\label{lemma:true_estimated_dist} 
Assuming that $\mathcal{P}^f$ and $\mathcal{P}^t$ admit densities, we then obtain
\begin{equation}
    W(\mathcal{P}^t, \mathcal{P}^f)\leq \operatorname{diam}(\mathcal{Z}\times\mathcal{Y})\sqrt{\frac{1}{2}\mathbb{E}_{z\sim\mathcal{P}_\mathcal{Z}}\left[\KL{\mathcal{P}^t_{\mathcal{Y}|z}}{\mathcal{P}^f_{\mathcal{Y}|z}}\right]},\\
\end{equation}
where $\KL{\mathcal{P}}{\mathcal{Q}}=-\int_{\mathcal{Z}\times\mathcal{Y}} \log \frac{d \mathcal{Q}}{d \mathcal{P}} d \mathcal{P}$ is the Kullback-Leibler divergence; and where $\operatorname{diam}(\cdot)$ is the diameter of the space received as input, i.e.\ the largest distance obtainable in that space.
\end{lemma}

For a proof, see Supplementary Note A. Lemmas \ref{lemma:mse_bound} and \ref{lemma:true_estimated_dist} now relate the Wasserstein distance formulation to loss functions occurring in regression and classification tasks that are easily computable, and with the desired statistical properties. Together with the relation shown in Section \ref{section:ssup}, we can now propose a ML formulation that both addresses the prediction task, and enforces domain invariance.

\section{Learning a Domain Invariant Neural Network}
\label{section:dinn}

Consider the data available consists of examples sampled from both domains, specifically, from distributions $\mathcal{P}$ and $\mathcal{Q}$. Under these distributions, part of the data comes with the true labels. For the rest of the data, labels are inferred via the functions $f$ and $g$ respectively. We denote by $(X^p, Y^p)$ the dataset of $n$ examples drawn from the first domain $\mathcal{P}$ and by $(X^q, Y^q)$ the dataset of $m$ examples drawn from the second domain $\mathcal{Q}$.
Based on Theorem \ref{th_final} and Lemmas \ref{lemma:mse_bound}--\ref{lemma:true_estimated_dist}, one can define a learning procedure that consists of simultaneously minimizing a supervised loss function $\mathcal{L}$ on each domain and the Wasserstein distance $W(\mathcal{P},\mathcal{Q})$ aligning distributions of the two domains. In the classification setting, the supervised loss for the first domain is defined as
\begin{equation}
    \mathcal{L}(Z^p,Y^p) = \frac{1}{n}\sum_{i = 1}^{n} \KL{f(Z_i^p)}{Y_i^p},
     \label{objective:supervised}
\end{equation}
where $Y_i^p$ and $f(Z_i^p)$ are vectors containing probabilities of each class. A similar loss function can be built for the the second domain. Note that the loss is only effective on the examples that come with a true label, because when the label is inferred, we have $\KL{f(Z_i^p)}{Y_i^p} = \KL{f(Z_i^p)}{f(Z_i^p)} = 0$. In a similar fashion, for the regression setting, we define $\mathcal{L}(Z^p, Y^p) = \frac{1}{n}\sum_{i = 1}^{n} \abs{f(Z^p_i) - Y^p_i}$. For the minimization of the Wasserstein distance $W(\mathcal{P},\mathcal{Q})$ we consider the dual form provided in Eq.\ \eqref{eq:wlipschitz}, specifically, an empirical estimate of it: 
\begin{equation}
    W(\mathcal{P},\mathcal{Q}) \approx \max_{\varphi \colon  \norm{\varphi}_{Lip} \leq 1} \Big\{\underbrace{\frac{1}{n}\sum_{i = 1}^n \varphi(Z^p_i, Y^p_i)
     - \frac{1}{m}\sum_{i = 1}^m  \varphi(Z^q_i, Y^q_i)
     }_{\displaystyle \big. \Delta(Z^p, Y^p, Z^q, Y^q) \big.}\Big\}
     \label{objective:wasserstein}
\end{equation}
and this forms our domain critic. Finally, one can sum the supervised terms and the domain critic, i.e.\ Eqs.\ \eqref{objective:supervised} and \eqref{objective:wasserstein}, and optimize the resulting objective w.r.t.\ the functions $f$, $\Phi$, $\varphi$ (more precisely, the parameters of the neural networks implementing these functions). This can be formulated as the GAN-like optimization problem:
\begin{equation}
    \begin{split}
        \min\limits_{\Phi,f}\max\limits_{\varphi}\quad & \Big\{\Delta(\Phi(X^p), Y^p, \Phi(X^q), Y^q) \\[-1mm]&\qquad + \lambda_p\mathcal{L}(f(\Phi(X^p)), Y^p) \\[.5mm]&\qquad\qquad + \lambda_q\mathcal{L}(f(\Phi(X^q)), Y^q)\Big\} \qquad  \text{s.t. } \norm{\varphi}_{Lip} \leq 1.
    \end{split}
    \label{eq:objective}
\end{equation}
where the classifier terms and the domain critic are in competition. The hyperparameters $\lambda_p$ and $\lambda_q$ can either be fixed to $1-\alpha$ and $1-\beta$ respectively (and for classification multiplied by the domain's diameter) in order to match the theory; or they can be selected heuristically or based on some validation procedure. The Lipschitzness constraint on $\varphi$ is practically enforced by using one of the regularization techniques mentioned at the end of Section \ref{section:background}. Additionally, a constraint on the scale of the representation or the Lipschitzness of the classifier $f$ can be added in order to prevent an arbitrary downscaling of the representation which may cause the Wasserstein distance to artificially go to zero. Lastly, supplementary regularization terms, such as EntMin \cite{grandvalet2005semi}, Virtual Adversarial Training \cite{miyato2018virtual}, and Virtual Mixup \cite{mao2019virtual} can be added to the objective, in order to take further advantage of the unlabeled examples. A visual representation of our model is given in Figure \ref{fig:diagram}.
The steps needed to train a domain invariant networks are summarized in Algorithm \ref{alg:two}.

\begin{figure}
    \centering
    \includegraphics[width=\columnwidth]{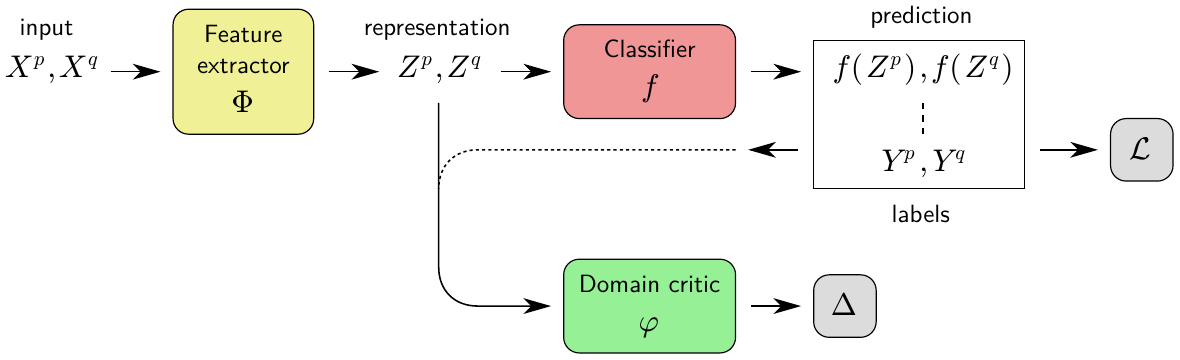}
    \caption{Diagram of the proposed machine learning model, that induces a domain-invariant representation through a domain critic.}
    \label{fig:diagram}
\end{figure}

\begin{algorithm}
\caption{Algorithm for training our proposed domain invariant network. The function `$\mathrm{Rev\_Grad}$' denotes a gradient reversal layer, which leaves the forward pass unchanged but multiplies the gradient by $-1$ in the backward pass (see e.g.\ \cite{ganin2016domain}).}\label{alg:two}
\KwData{Semi-supervised datasets for both domains: $(X^p,Y^p),(X^q,Y^q)$}
\KwIn{Untrained $\Phi, f$ with parameters $\theta$, hyperparameters $\lambda_{p}, \lambda_{q}$}
\KwResult{Trained $\Phi$, $f$}
\For{$\mathrm{epochs}$}{
    \For{$\mathrm{batch}$ $(x^p,y^p),(x^q,y^q)\in (X^p,Y^p),(X^q,Y^q)$}{
        \Comment{Compute features}
        $z^p \gets \Phi(x^p)$\\
        $z^q \gets \Phi(x^q)$\\
        \Comment{Impute instances with missing labels in the batch}
        $y^p_{i} \gets f(z^p_{i})~~\forall~\mathrm{unlabeled}~i$\\
        $y^q_{j} \gets f(z^q_{j})~~\forall~\mathrm{unlabeled}~j$\\
        \Comment{Reverse gradient of features for domain discriminator}
        $z^p_{\mathrm{rev}},y^p_{\mathrm{rev}} \gets \operatorname{Rev\_Grad}(z^p),\operatorname{Rev\_Grad}(y^p)$\\
        $z^q_{\mathrm{rev}},y^q_{\mathrm{rev}} \gets \operatorname{Rev\_Grad}(z^q),\operatorname{Rev\_Grad}(y^q)$\\
        \Comment{Compute losses}
        $L_\mathrm{disc} \gets \Delta(z^p_{\mathrm{rev}}, y^p_{\mathrm{rev}}, z^q_{\mathrm{rev}}, y^q_{\mathrm{rev}})$\\
        $L_\mathrm{classif} \gets \lambda_p\mathcal{L}(f(z^p)), y^p) + \lambda_q\mathcal{L}(f(z^q)), y^q)$\\
        $L_\mathrm{total} \gets L_\mathrm{disc} + L_\mathrm{classif}$\\
        \Comment{Perform a gradient descent step}
        $\theta \gets \theta - \gamma \nabla L_\mathrm{total}$
}}
\end{algorithm}

We now outline some specific condition on the data distribution which provides statistical consistency of the estimator.
\begin{remark}
Let us consider the condition on which Eq.\ \eqref{eq:objective} is statistically consistent when sufficiently many labeled examples are observed. 
For the data distributions $\mathcal{P}_{\mathcal{XY}}^t=\mathcal{P}_{\mathcal{Y|X}}^t \mathcal{P}_{\mathcal{X}}^t$ and $\mathcal{Q}_{\mathcal{XY}}^t=\mathcal{Q}_{\mathcal{Y|X}}^t        \mathcal{Q}_{\mathcal{X}}^t$, suppose that there exists a function $z=\Phi^o(x)$ such that
\begin{enumerate}[(i)]
    \item the conditional probabilities satisfy 
    $\mathcal{P}_{\mathcal{Y|X}}^t(y|(\Phi^o)^{-1}(A))=\mathcal{Q}_{\mathcal{Y|X}}^t(y|(\Phi^o)^{-1}(A))$ for any measurable subset $A\subset \mathcal{Z}$, and 
    \item $\Phi_*^o P_{\mathcal{X}} = \Phi_*^o Q_{\mathcal{X}}$,  
\end{enumerate}
where $\Phi_*^o$ is the push-forward with the function $x\mapsto \Phi^o(x)$. 
Under the above assumptions, we see that 
$\Phi_*^o\mathcal{P}_{\mathcal{XY}}^t=\Phi_*^o\mathcal{Q}_{\mathcal{XY}}^t$ holds. Hence, the Wasserstein distance estimator $\Delta$ under the population distribution becomes zero. Due to the assumption on the conditional distribution, the optimal classifier $f$ on $\mathcal{Z}$ is common  for both distributions.  
Therefore, the optimal classifier with domain-invariant features is obtained by Eq.\ \eqref{eq:objective}.
\end{remark}
To put it more simply, by assuming there exists a feature map where marginal distributions are aligned, and that the conditional distributions are equal almost everywhere for the image of $\Phi^o(x)$, optimizing our objective function leads to the optimal classifier.

\subsection{Generalization Bounds}

Interestingly, the Wasserstein distance between the true distributions of the two domains (that we have upper-bounded in Theorem \ref{th_final}) can also be related to the risks of the classifier on the two domains. Let $\mathcal{R}_{\mathcal{P}^t}(f)=\mathbb{E}_{z,y\sim\mathcal{P}^t}\left[\mathcal{L}(f(z),y)\right]$ be the risk or error of a classifier $f$. We here develop a result using the joint Wasserstein distance, similar to previous result obtained by \citep{redko2017theoretical} on the distance between marginals.
\begin{theorem}
\label{th_diff_err}
Let $\mathcal{Z},\mathcal{Y}$ be two compact measurable metric spaces whose product space has dimension $d$. Let $\mathcal{P}^t,\mathcal{Q}^t\in\mathcal{M}_{+}^{1}(\mathcal{Z}\times\mathcal{Y})$ two joint distributions associated to the two domains, and $\widehat{\mathcal{P}}^t,\widehat{\mathcal{Q}}^t$ their empirical counterparts. Let the transport cost function $c$ associated to the optimal transport problem be $c(z_1,y_1;z_2,y_2)=\norm{\veccol{z_1 \\ y_1}-\veccol{z_2 \\ y_2}}_2$, the Euclidean distance as the metric on $\mathcal{Z}\times\mathcal{Y}$ and $\mathcal{L}:\mathcal{Y}\times\mathcal{Y}\rightarrow\R^{+}$ a symmetric $\kappa$-Lipschitz loss function. Then for any $d^\prime > d$ and $\psi^\prime<\sqrt{2}$ there exists some constant $N_0$ depending on $d^\prime$ such that for any $\delta > 0$ and $\min(N_P, N_Q)\geq N_0\max(\delta^{-(d^\prime+2)},1)$ with probability at least $1-\delta$ for all $\lambda$-Lipschitz $f$ the following holds:
\begin{equation}
    \abs{\mathcal{R}_{\mathcal{Q}^t}(f)-\mathcal{R}_{\mathcal{P}^t}(f)} \leq \kappa\sqrt{(\lambda^2 +1)}\left[W_{1}(\widehat{\mathcal{P}}^t,\widehat{\mathcal{Q}}^t)+\sqrt{\frac{2}{\psi^{\prime}}\log \left(\frac{1}{\delta}\right)}\left(\sqrt{\frac{1}{N_{P}}}+\sqrt{\frac{1}{N_{Q}}}\right)\right].
\end{equation}
\label{theorem:gap}
\end{theorem}

(A proof is given in Supplementary Note A.) In other words, the empirical Wasserstein distance between the two domains upper-bounds the prediction performance gap between the two domains. In practice, we can therefore expect the optimization of the objective in Eq.\ \eqref{eq:objective} to not only reduce the Wasserstein distance between domains (as we have shown in the previous sections), but also to produce a more uniform classification accuracy across domains and therefore a higher minimum accuracy.

We may also want to compare the joint discriminator $W(\mathcal{P},\mathcal{Q})$ to the more common marginal discriminator $W(\mathcal{P}_\mathcal{Z},\mathcal{Q}_\mathcal{Z})$. Indeed, it seems that in many cases, such as when the conditional distribution is identical between the two domains, the solution obtained appears to be equivalent. This is true due to theoretical reasons we will explore here. Let us first recall a bound on the distance between errors with a marginal Wasserstein distance.
\begin{theorem}[From \cite{shen2017wasserstein} (Adapted)]
 Let $\mathcal{P}^t_\mathcal{Z}, \mathcal{Q}^t_\mathcal{Z} \in \mathcal{M}_{+}^{1}(\mathcal{Z})$ be two probability measures. Assume the functions $f \in H$ are all $\lambda$-Lipschitz continuous for some $\lambda$. Then for every $f \in H$ the following holds
$$
\abs{\mathcal{R}_{\mathcal{P}^t}(f)-\mathcal{R}_{\mathcal{Q}^t}(f)} \leq 2 \lambda \cdot W_{1}\left(\mathcal{P}^t_\mathcal{Z}, \mathcal{Q}^t_\mathcal{Z}\right)+\mathcal{E}
$$
where $\mathcal{E}$ is the combined error of the ideal $f^{*}$ that minimizes the combined error $\mathcal{R}_{\mathcal{P}^t}(f)+\mathcal{R}_{\mathcal{Q}^t}(f)$.
\end{theorem}
The main difference compared with our bound is the presence of $\mathcal{E}$, the combined error of the hypothesis on both domains. Indeed, when the features of the two domains are properly aligned, the bound obtained with a joint or marginal Wasserstein distances are similar. However, when the domains are not properly aligned, usually due to the transformation between the two domains being large, and to the lack of labeled samples, we can have a large $\mathcal{E}$ such as $\mathcal{E}=1$, which renders the bound very large. Such a case arises when both domains have entirely identical samples but with opposite labels, for instance. The bound with the joint Wasserstein distance can lead to features more aligned even with large transformations between domains. We expect to observe similar performances between marginal and joint discriminators on experiments with simple transformations, and larger discrepancies as the transformations get larger.

\section{Experiments}

To test whether our proposed approach truly achieves an invariant representation and reduces the performance gap between domains (as predicted by Theorems \ref{th_final} and \ref{th_diff_err} respectively), we conduct experiments on three common image classification problems. First, a handwritten digits recognition task where the digit images come from two popular datasets: MNIST \cite{lecun1998mnist} and SVHN \cite{netzer2011svhn}, each of them constituting one domain. Then, we consider the Office-Caltech classification dataset \citep{gong2012geodesic}, which consists of four domains. Finally, we consider the recent and more complex PACS multi-domain image recognition dataset \cite{li2017deeper}, which also consists of four domains. We describe below these multi-domain tasks, and the training procedure for our models. More details are provided in Supplementary Note B.

\subsection{Data and Models}

MNIST and SVHN are two common digit recognition datasets composed of 60000 and 73257 training examples respectively. While MNIST digits are black\&white, SVHN digits are colored and have more complex appearances, making them harder to predict. In our MNIST-SVHN two-domain scenario, we simulate partly labeled data by only providing labels for a random subset of examples (1000 for each domain for the experiments of Table \ref{table:main}, and 3000 per domain for experiments of Table \ref{table:semisup}). The remaining examples are given unlabeled. MNIST images are brought to the SVHN format by scaling and setting each RGB component to the MNIST grayscale value. For experiments in Tables \ref{table:main} and \ref{table:semisup}, the function $\Phi$ is implemented by the Conv-Large model from \cite{miyato2018virtual}. The model takes as input images of size $32 \times 32 \times 3$. We use small random translations of 2 pixels as well as color jittering as data augmentation.

Importantly, for the purpose of evaluating the domain invariance of representations, we would like to stabilize the scale of representations learned by the different models. Specifically, we add for the experiments of Table \ref{table:main} a further penalty to the objective: the Wasserstein distance between the distribution of distances in representation space (the histogram of distances of the union distribution of $\mathcal{P}_\mathcal{Z}$ and $\mathcal{Q}_\mathcal{Z}$) and a predefined Gaussian mixture, which we set to be a univariate mixture of Gaussians $\frac{1}{10}\mathcal{N}(5, 2)+\frac{9}{10}\mathcal{N}(15,3)$.
The two modes model distances between data points of same class and of different classes respectively. Since the distribution of distances is a 1-dimension histogram, the Wasserstein distance can be computed analytically \cite{peyre2019computational}. This added constraint ensures a similar scale for the representation extracted by our model and the different baselines. In particular, it ensures that a reduction of Wasserstein distance in representation space can be reliably interpreted as an increase of domain invariance, and not as a simple scaling of the representation. We have experimented with several new metrics and constraints for settling for this one, although it comes with some side effects. Indeed, by its very definition, it implies that there should be 10 equidistant and equally sized clusters, which is an assumption that is not verified for all datasets (for instance, SVHN). Moreover, it gives a stronger advantage (in the form of a prior) to the bare methods, without any discriminator, acting indirectly as one.

Our second scenario is based on the Office-Caltech dataset. It is composed of four domains (Amazon, Caltech, DSLR, Webcam) containing pictures of objects present in offices (such as monitors) from different sources, such as pictures from a real office, or ones with white backgrounds from an e-commerce website. There are 10 classes, and a varying number of samples depending on the domain (between 150 and 1100). We use the Decaf6\citep{donahue2014decaf} features with 4096 dimensions. We use the Resnet-18 architecture \cite{he2016deep}. We train a model on each possible bi-domain task (6 tasks) and average the resulting accuracies per domain.

Our third and last  scenario is based on the PACS dataset, which consists of 10000 examples, with 4 domains (Photo, Art, Cartoon, Sketch) and 7 classes. We simulate semi-labeled data by providing labels for only 500 randomly sampled images from each domain, and giving remaining images unlabeled. The classes and domains are imbalanced, i.e.\ contain a different number of examples. The images are resized to $224\times224\times3$, and a pipeline of data augmentation is applied based on RandAugment \cite{cubuk2020randaugment}. We again use the Resnet-18 architecture. On this dataset, we test domain invariance in a `one vs.\ rest' setting.

\smallskip

In all our experiments, the classifier $f$ is a simple 2-layer MLP, and the discriminator $\varphi$ a 3-layer MLP with spectral normalized weights \cite{miyato2018spectral}. (On the multi-domain PACS, we use a discriminator for each domain, computed in a one-vs-rest manner.) The weights (hyperparameters) for each loss term $\lambda_p$ and $\lambda_q$ are set to one, as well as the discriminator's. Unless mentioned otherwise, the networks are trained for 20 to 50 epochs using the Adam \cite{kingma2014adam} optimizer.

\subsection{Results and Analysis}
\label{section:results}
As a first experiment, we study the effect of the domain critic we have proposed in Section \ref{section:dinn} on the accuracy of the model, and on the Wasserstein distance between the two domains. We consider two baselines for comparison: (1) a simple supervised neural network without domain critic, (2) a supervised network where the critic $\varphi$ is based only on \textit{marginal} distributions (such as proposed in \citep{shen2017wasserstein}). These two baselines can be interpreted as an ablation study of our method, where instead of applying the Wasserstein distance to the joint input-label distribution, we apply it first only to the input variables (marginal critic), and then to no variables at all (no critic). For this experiment we do not use any additional losses/regularizers, and simply optimize the classification and domain alignment terms. We report the Wasserstein distance between the two domains' joint distributions, and the minimum classification accuracy for the two domains. These are two properties that our domain-invariant network is expected to fulfill (Theorems \ref{th_final} and \ref{theorem:gap} respectively). Results are shown in Table \ref{table:main}.

\begin{table}[h]
    \centering
    \begin{tabular}{p{.4\textwidth}|cc|cc|c}
    \toprule
    & \multicolumn{4}{c|}{Accuracy} & \\
        Model  &  \textit{MNIST} & \textit{SVHN} & Avg & Min & W dist. \\
        \midrule
        No critic & \textit{98.9} & \textit{90.2} & 94.55 & 90.2 & 3.92\\
        Marginal critic \citep{shen2017wasserstein} & \textit{97.5} & \textit{91.5}  & 94.5 & \textbf{91.5} & 3.43\\
        Joint critic (Ours) & \textit{97.5} & \textit{91.5}  & \textbf{94.6} & \textbf{91.5} & \textbf{3.36} \\
        \bottomrule
    \end{tabular}
    \caption{Effect of the domain critic on the classification accuracy and the Wasserstein distance between the two domains in representation space. We use 1000 labels per domain. Best performance is shown in bold. For indicative purpose, we report in the first two rows the classification accuracy on individual domains.}
    \label{table:main}
\end{table}

Results corroborate our theory. In particular, we observe that the Wasserstein distance significantly decreases under the effect of adding a domain critic, specifically a joint domain critic that puts more focus on $\mathcal{Y}$, and the minimal accuracy over the two domains increases. Furthermore, we observe in this experiment that the use of a joint critic also leads to the highest average accuracy across domains.

\smallskip

Independently of the question of domain invariance, unsupervised data has already been routinely leveraged by classical semi-supervised learning approaches. These approaches have shown powerful on data with manifold structure (e.g.\ \cite{DBLP:conf/nips/RasmusBHVR15,li2017triple}). In our next experiment, we test the benefit of domain alignment techniques on models that are already equipped with semi-supervised learning mechanisms. Specifically, we consider a combination of two common semi-supervised techniques: conditional entropy minimization (EntMin) \cite{grandvalet2005semi} and virtual adversarial training (VAT) \citep{miyato2018virtual}, which have shown strong empirical performance on numerous tasks. Results are given in Table \ref{table:semisup}.

\begin{table}[h]
    \centering
    \begin{tabular}{l|cc|cc}
    \toprule
    & \multicolumn{4}{c}{Accuracy} \\
        Model  &  \textit{MNIST} & \textit{SVHN} & Avg & Min \\
        \midrule
        \textit{No critic + VAT/EntMin (only MNIST)} & \textit{99.14} & $\cdot$ & $\cdot$ & $\cdot$\\
        \textit{No critic + VAT/EntMin (only SVHN)} & $\cdot$ & \textit{94.79}  & $\cdot$ & $\cdot$ \\
        \midrule
        No critic & \textit{98.76} & \textit{87.33} & 93.05 & 87.33\\
        No critic + VAT/EntMin & \textit{99.29} & \textit{91.86} & 95.58 & 91.86 \\
        \midrule
        Joint critic (Ours) + VAT/EntMin &\textit{99.26} & \textit{92.75}  & 96.01 & 92.75\\
        Joint critic (Ours) + VAT/EntMin + Fine-tuning & \textit{99.09} & \textit{94.33}  & \bf{96.71} & \bf{94.33}\\
        \bottomrule
    \end{tabular}
    \caption{Evaluation of our method in combination with classical semi-supervised learning regularizers (VAT\,+\,EntMin). We use 3000 labels per domain. Best results are in bold.}
    \label{table:semisup}
\end{table}

We observe that semi-supervised learning on both domains, achieved by a combination of VAT and EntMin, leads to a strong baseline. In particular, it achieves the highest performance on MNIST. Our domain invariant approach, combined with the same techniques, further improves over this strong baseline, by reducing the accuracy gap between the two domains and arriving at a higher accuracy on the most difficult domain (and also on average). Results are further improved by applying a final supervised  fine-tuning step to our model without discriminator, and with classification loss re-weighted depending on the domain classification error. Note that this fine-tuning step, while improving classification, hampers the domain alignment and therefore the reusability of features for alternative tasks as well as the domain privacy. More details in Supplementary Note C.1.

\smallskip

Table \ref{table:officecaltech} displays the average results of our bi-domain experiments on the Office-Caltech dataset. We observe that the joint critic (ours) is always better than the marginal one. We also observe that  the joint critic, compared to the lack thereof, leads to more uniform results across domains (and therefore higher minimum accuracy), as well as higher average. These observations are consistent with our theoretical results.

\begin{table}[h]
    \centering
    \begin{tabular}{l|cccc|cc}
        \toprule
        & \textit{Amazon} &
        \textit{Caltech} &
        \textit{DSLR} &
        \textit{Webcam}& Avg & Min\\\midrule
        No critic   & \textit{90.63} & \textit{84.27} & \textit{93.75} & \textit{98.31} & 91.74 & 84.27\\
        Marginal critic  & \textit{91.32} & \textit{85.46} & \textit{92.71} & \textit{96.07} & 91.39 & 85.46\\
        Joint critic (Ours)  & \textit{91.67} & \textit{86.05} & \textit{93.75} &\textit{97.00} & \textbf{92.12} & \textbf{86.05}\\
        \bottomrule
    \end{tabular}
    \caption{Comparison of our method to a classic marginal domain critic and an absence of critic, on the Office-Caltech dataset. We use 200 labels per domain except for DSLR which uses 100. Accuracy is reported as averages over of all bi-domain tasks. Best results overall are in bold.
    }
    \label{table:officecaltech}
\end{table}

Finally, Table \ref{table:pacs} shows prediction performance on the more complex PACS dataset. We test our model on this data in a one-vs-rest setting, so that the model must learn to be invariant between one domain and the three remaining domains.

\begin{table}[h]
    \centering
    \begin{tabular}{l|cccc|cc}
        \toprule
        & \textit{Art} &
        \textit{Cartoon} &
        \textit{Photo} &
        \textit{Sketch}& Avg & Min\\
        & \textit{vs.\ R} & \textit{vs.\ R}  & \textit{vs.\ R}  & \textit{vs.\ R}  \\\midrule
        No critic   & \textit{84.03} & \textit{85.62} & \textit{78.74} & \textit{60.45} & 77.21 & 60.45\\
        Marginal critic  & \textit{84.08} & \textit{87.07} & \textit{78.26} & \textit{64.44} & 78.46 & 64.44\\
        Joint critic (Ours)  & \textit{77.15} & \textit{88.61} & \textit{83.41} &\textit{71.52} & \textbf{80.18} & \textbf{71.52}\\
        \bottomrule
    \end{tabular}
    \caption{Comparison of our method to a classic marginal domain critic and an absence of critic, on the PACS dataset. We use 500 labels per domain. Accuracy is reported as Domain vs.\ Rest. Best results overall are in bold.}
    \label{table:pacs}
\end{table}

Again, we find that our model produces the best minimum and average accuracy in each scenario. We found that a trade-off may exist between Art and other domains. Although our method performs worse than competitors on this domain, we observe that it leads to domain accuracies more concentrated around the mean, and therefore a higher minimum accuracy. Additionally, we note that the average accuracy has also increased. 

\smallskip

Lastly, we would like to reiterate that the problem of domain invariance has received considerably less attention in the context of deep neural networks than the tasks of domain adaptation and domain generalization. Our quantitative results as well as the multiple baseline results aim to provide useful reference values for future work on domain invariance.

\subsection{Visual Insights on Learned Representations}

While results in the section above have verified quantitatively the performance of our proposed domain invariant network, we would like to also present some qualitative insights.

As a first experiment, we visualize how the representation of the Conv-Large model trained with our proposed approach becomes more task-specific and less domain-dependent throughout training. For this, we take samples from $\mathcal{P}_\mathcal{Z}$ and $\mathcal{Q}_\mathcal{Z}$, join them, and perform a low-dimensional embedding of the resulting distribution via UMAP \cite{mcinnes2018umap}. Plots before and after training are shown in Figure \ref{fig:insights} (left). The visualization suggests that the two domains are strongly separated initially, but under the influence of domain invariant training, they collapse to the same regions in representation space. As expected, the learned representation also better resolves the different classes after training (here roughly given by the cluster structure).

\begin{figure}[h]
    \centering
    \includegraphics[width=.98\textwidth]{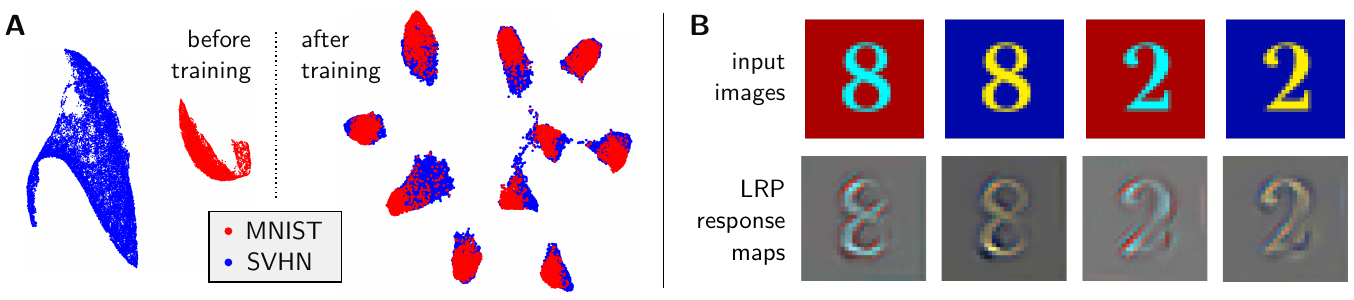}
    \caption{Left: UMAP visualization of the extracted representation before and after training. Right: Response (extracted using LRP) of the model to various input digits with different style (color).}
    \label{fig:insights}
\end{figure}

As a second experiment, we present SVHN-like synthetic examples to our domain invariant network and vary the digit and the colors. Using the Layer-wise Relevance Propagation (LRP) explanation method \cite{bach-plos15}, we then compute for each prediction the local response of the model. The LRP method identifies the contribution of each input pixel to the prediction. These pixel-wise contributions can also be seen as the summands of a linear model, and the latter forms a local interpretable surrogate for the original model. We refer to weights of this linear model as the `LRP response map' (details on LRP and how to generate response maps are given in Supplementary Note C.2).

A selection of examples and associated LRP response maps is shown in Figure \ref{fig:insights} (right), featuring two digit classes and SVHN-like color variations. Although we would expect that style and color play a marginal role in representation space (our objective has enforced invariance between the colored SVHN and the black\&white handwritten MNIST domains), recognizing such style and color variations remains an integral part of the neural network prediction strategy. We indeed observe that the model precisely adapts to the input digit by providing {\em domain-specific} response maps of corresponding colors. This strategy is therefore instrumental in the process of building the domain invariant representation.

\section{Conclusion}
Real-world data is often heterogeneous, subject to sub-population shifts, or coming from multiple domains. In this work, we have for the first time studied the problem of learning domain-invariant representations as measured by the joint Wasserstein distance. We have created a theoretical framework for semi-supervised domain invariance and have contributed several upper-bounds to the Wasserstein distance of joint distributions that links domain invariance to practical learning objectives.

In our benchmark experiments, we find that optimizing the resulting objective leads to high prediction accuracy on both domains while simultaneously achieving high domain invariance, which we also observe qualitatively on low-dimensional embedding visualizations. We have further observed, somewhat counterintuitively, that domain adversarial training can still result in a model that makes use of domain-specific features in order to arrive at the domain-invariant representations.

Our work allows for several future extensions. For example, it would be interesting to obtain a theoretical connection to other representation learning methods, in particular, contrastive learning, that may be integrated to our framework. Furthermore, an extension of our theory to domain generalization could enable further applications and increase our understanding of domain generalization itself.

Overall, our work on domain invariance provides new theoretical insights as well as quantitative competitive results for a number of scenarios and baselines. We believe it thereby constitutes a useful first basis for further research on domain-invariant ML models and applications thereof.

\subsubsection*{Acknowledgements}

KRM was partly supported by the Institute of Information \& Communications Technology Planning \& Evaluation (IITP) grants funded by the Korea government(MSIT) (No. 2019-0-00079, Artificial Intelligence Graduate School Program, Korea University and No. 2022-0-00984, Development of Artificial Intelligence Technology for Personalized Plug-and-Play Explanation and Verification of Explanation). This work was supported in part by the German Ministry for Education and Research (BMBF) under Grants 01IS14013A-E, 01GQ1115, 01GQ0850, 01IS18025A and 01IS18037A; and by JSPS KAKENHI Grant Number 20H00576, and 19H04071. Correspondence to TK, KRM and GM. 

{\small
\bibliographystyle{abbrv}
\bibliography{references}
}

\end{document}


\begin{frontmatter}

\title{Learning Domain Invariant Representations by Joint Wasserstein Distance Minimization\\[1mm]\textsc{\large (Supplementary Notes)}}
  
\author{L\'eo~And\'eol}
\author{Yusei~Kawakami}
\author{Yuichiro~Wada}
\author{Takafumi~Kanamori}
\author{\\Klaus-Robert~M\"uller}
\author{Gr\'egoire~Montavon}

\end{frontmatter}

\vspace{-1.25cm}

\section{Proofs of Main Results}
\label{appendix:theory}
In the following, we give the proofs of the main theoretical results presented in the paper. After providing the formal mathematical proof, we detail for each of them in the paragraph below the steps taken to reach the final result.
\begin{lemma}
\label{lemma:wasserstein_mixture_decomposition}
Let $\mathcal{K}\in\mathcal{M}_{+}^1(\mathcal{Z}\times\mathcal{Y})$ be an arbitrary probability distribution, we then have
$ W(\mathcal{P},\mathcal{K})\leq \alpha W(\mathcal{P}^t, \mathcal{K}) + (1-\alpha)W(\mathcal{P}^f, \mathcal{K}),
$ and for the special case where $\mathcal{K}=\mathcal{P}^t$, we get
\begin{equation}
    W(\mathcal{P},\mathcal{P}^t) = (1-\alpha)W(\mathcal{P}^f, \mathcal{P}^t).
    \label{eq:lemma1-specific}
\end{equation}
\end{lemma}
\begin{proof}
\begin{align}
    \label{lemma1a} W(\mathcal{P},\mathcal{K})&=
    \sup\limits_{\norm{\varphi}_{Lip}\leq 1} \left\{\int_\mathcal{Z} \varphi d\mathcal{P} - \int_\mathcal{Z} \varphi d\mathcal{K} \right\}\\
    \label{lemma1b} &=\sup\limits_{\norm{\varphi}_{Lip}\leq 1} \left\{ \alpha\int_\mathcal{Z} \varphi d\mathcal{P}^t  + (1-\alpha)\int_\mathcal{Z} \varphi d\mathcal{P}^f - \int_\mathcal{Z} \varphi d\mathcal{K} \right\}\\
    \label{lemma1c}&= \sup\limits_{\norm{\varphi}_{Lip}\leq 1} \left\{ \alpha\left(\int_\mathcal{Z} \varphi d\mathcal{P}^t - \int_\mathcal{Z} \varphi d\mathcal{K}\right) + (1-\alpha)\left(\int_\mathcal{Z} \varphi d\mathcal{P}^f - \int_\mathcal{Z} \varphi d\mathcal{K}\right) \right\}\\
    \label{lemma1d} &\leq \sup\limits_{\norm{\varphi}_{Lip}\leq 1} \left\{ \alpha\left(\int_\mathcal{Z} \varphi d\mathcal{P}^t - \int_\mathcal{Z} \varphi d\mathcal{K}\right) \right\}\\
    &\qquad+ \sup\limits_{\norm{\varphi}_{Lip}\leq 1} \left\{ (1-\alpha)\left(\int_\mathcal{Z} \varphi d\mathcal{P}^f - \int_\mathcal{Z} \varphi d\mathcal{K}\right) \right\}\\
    \label{lemma1e} &= \alpha\sup\limits_{\norm{\varphi}_{Lip}\leq 1} \left\{ \int_\mathcal{Z} \varphi d\mathcal{P}^t - \int_\mathcal{Z} \varphi d\mathcal{K} \right\}+ (1-\alpha)\sup\limits_{\norm{\varphi}_{Lip}\leq 1} \left\{ \int_\mathcal{Z} \varphi d\mathcal{P}^f - \int_\mathcal{Z} \varphi d\mathcal{K} \right\}\\
    \label{lemma1f} &= \alpha W(\mathcal{P}^t, \mathcal{K}) + (1-\alpha)W(\mathcal{P}^f, \mathcal{K})
\end{align}
We start the proof in Eq.\ \eqref{lemma1a} by simply stating the definition of the dual of the 1-Wasserstein distance (with the cost function being the metric of our space). Because $\alpha \in (0,1)$ and $\mathcal{P}^t$ and $\mathcal{P}^f$ are finite, we can decompose the integral of a mixture of measures as a mixture of integrals of each of the elementary measures, as we do in Eq.\ \eqref{lemma1b}. In Eq.\ \eqref{lemma1c} we also decompose the second integral into parts weighted by the same $\alpha$ and $1-\alpha$ (which sum to one), group those with the corresponding integral on the other measure, and factorize by their weights. Using a property of the supremum in Eq.\ \eqref{lemma1d}, we upper bound the supremum of a sum by the sum of the supremum. What we obtain then by pulling out constant in Eq.\ \eqref{lemma1e} is a sum of two Wasserstein distances by their dual definition, and we hence obtain Eq.\ \eqref{lemma1f} and complete the proof.
For the second result, in the case where $\mathcal{K}=\mathcal{P}^t$, the first member of Eq.\ \eqref{lemma1d} cancels out, therefore we do not need to apply the supremum bound, and therefore we have an equality.
 \end{proof}
\begin{lemma}
\label{lemma:mse_bound} 
\begin{equation}
    \begin{aligned}
W(\mathcal{P}^{t}, \mathcal{P}^{f}) &\leq\mathbb{E}_{(z,y) \sim \mathcal{P}^t}\big[|y-f(z)|\big]
\end{aligned}
\end{equation}
\end{lemma}
\begin{proof}
\begin{equation}
\begin{aligned}
W(\mathcal{P}^{t}, \mathcal{P}^{f}) &=\inf _{\pi} \int c((z, y),(z^{\prime}, f(z^{\prime})) \mathrm{d} \pi((z, y),(z^{\prime}, f(z^{\prime}))\\
& \leq \int c((z, y),(z, f(z)) \mathrm{d} \mathcal{P}^{t}(z, y)\\
&=\mathbb{E}[|y-f(z)|]
\end{aligned}
\end{equation}
\end{proof}
\begin{lemma}
\label{lemma:true_estimated_dist} 
Assuming that $\mathcal{P}^f$ and $\mathcal{P}^t$ admit densities, we then obtain
\begin{equation}
    W(\mathcal{P}^t, \mathcal{P}^f)\leq \operatorname{diam}(\mathcal{Z}\times\mathcal{Y})\sqrt{\frac{1}{2}\mathbb{E}_{z\sim\mathcal{P}_\mathcal{Z}}\left[\KL{\mathcal{P}^t_{\mathcal{Y}|z}}{\mathcal{P}^f_{\mathcal{Y}|z}}\right]},\\
\end{equation}
where $\KL{\mathcal{P}}{\mathcal{Q}}=-\int_{\mathcal{Z}\times\mathcal{Y}} \log \frac{d \mathcal{Q}}{d \mathcal{P}} d \mathcal{P}$ is the Kullback-Leibler divergence; and where $\operatorname{diam}(\cdot)$ is the diameter of the space received as input, i.e.\ the largest distance obtainable in that space.
\end{lemma}
 
\begin{proof}
In order to prove this result we have to rely on an upper bound of the Wasserstein distance by the Kullback-Leibler divergence, through the combination of two standard bounds. We therefore present this result here and a quick proof.
\begin{center}
\fbox{~~
\begin{minipage}{.85\textwidth}
\begin{unnumlemma}{\textbf{From \cite{gibbs2002choosing}}}
Let $\mathcal{P}, \mathcal{Q}\in\mathcal{M}_{+}^1(\mathcal{Z}\times\mathcal{Y})$ be two probability distributions on a compact measurable space $\mathcal{Z}\times\mathcal{Y}$, we then have
\begin{equation}
    W(\mathcal{P}, \mathcal{Q})\leq\operatorname{diam}(\mathcal{Z}\times\mathcal{Y})\sqrt{\frac{1}{2}\KL{\mathcal{P}}{\mathcal{Q}}}
\end{equation}
\centering and
\begin{equation}
    W(\mathcal{P}, \mathcal{Q})\leq\operatorname{diam}(\mathcal{Z}\times\mathcal{Y})\sqrt{\frac{1}{2}\KL{\mathcal{Q}}{\mathcal{P}}}
\end{equation}
\end{unnumlemma}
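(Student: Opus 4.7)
The plan is to obtain both inequalities by composing two classical estimates. First, the $1$-Wasserstein distance is controlled by the diameter of the ground space times the total variation distance; second, Pinsker's inequality controls the total variation by the square root of half the KL divergence. Because $W$ is symmetric while KL is not, it suffices to establish one of the two displayed inequalities and obtain the other by swapping $\mathcal{P}$ and $\mathcal{Q}$.

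For the first step, I would use the Kantorovich--Rubinstein duality \eqref{eq:wLipschitz}. For any $1$-Lipschitz test function $\varphi$ on the compact space $\mathcal{Z}\times\mathcal{Y}$, its oscillation is at most $\operatorname{diam}(\mathcal{Z}\times\mathcal{Y})$. By subtracting the midpoint $c = \tfrac{1}{2}(\sup\varphi + \inf\varphi)$, which leaves $\mathbb{E}_{\mathcal{P}}[\varphi] - \mathbb{E}_{\mathcal{Q}}[\varphi]$ unchanged since $\mathcal{P} - \mathcal{Q}$ is a zero-mass signed measure, I may assume $\|\varphi\|_\infty \leq \tfrac{1}{2}\operatorname{diam}(\mathcal{Z}\times\mathcal{Y})$. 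H\"older's inequality then yields
\begin{equation*}
\mathbb{E}_{\mathcal{P}}[\varphi] - \mathbb{E}_{\mathcal{Q}}[\varphi] \;\leq\; \|\varphi\|_\infty \cdot \|\mathcal{P}-\mathcal{Q}\|_1 \;\leq\; \operatorname{diam}(\mathcal{Z}\times\mathcal{Y}) \cdot \mathrm{TV}(\mathcal{P},\mathcal{Q}),
\end{equation*}
using the identity $\|\mathcal{P}-\mathcal{Q}\|_1 = 2\,\mathrm{TV}(\mathcal{P},\mathcal{Q})$. Taking the supremum over $\varphi$ gives $W(\mathcal{P},\mathcal{Q}) \leq \operatorname{diam}(\mathcal{Z}\times\mathcal{Y}) \cdot \mathrm{TV}(\mathcal{P},\mathcal{Q})$.

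For the second step, I would invoke Pinsker's inequality, $\mathrm{TV}(\mathcal{P},\mathcal{Q}) \leq \sqrt{\tfrac{1}{2}\,\KL{\mathcal{P}}{\mathcal{Q}}}$, a standard estimate that follows from a one-variable convexity/Taylor argument on $x \log x$. Substituting into the bound of the previous paragraph immediately produces the first displayed inequality of the lemma; exchanging the roles of $\mathcal{P}$ and $\mathcal{Q}$ in Pinsker's inequality and using the symmetry $W(\mathcal{P},\mathcal{Q}) = W(\mathcal{Q},\mathcal{P})$ produces the second.

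The main obstacle is largely bookkeeping rather than conceptual: the total variation admits several equivalent definitions (e.g.\ $\sup_A |\mathcal{P}(A) - \mathcal{Q}(A)|$ versus $\|\mathcal{P}-\mathcal{Q}\|_1$) that differ by a factor of two, and one must track these normalizations so that the constant $\tfrac{1}{2}$ inside the square root in the statement emerges correctly. No new probabilistic or analytic machinery beyond the two classical inputs is required.
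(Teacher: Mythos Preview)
Your proposal is correct and follows exactly the same two-step approach as the paper: bound $W$ by $\operatorname{diam}\cdot\mathrm{TV}$ (the paper simply cites this as Theorem~4 of \cite{gibbs2002choosing}, whereas you supply a short dual-formulation proof), then apply Pinsker's inequality. Your version is in fact more detailed than the paper's one-line proof sketch, and your remark about tracking the factor-of-two normalization of $\mathrm{TV}$ is well taken.
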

\begin{proof}
Combine the bound of the Wasserstein distance by the Total Variation distance (Theorem 4 of \cite{gibbs2002choosing}), and that one by the Kullback-Leibler Divergence using Pinsker's Inequality.
\end{proof}
\end{minipage}~~}
\end{center}
With that result, we show that under our conditions, the Kullback-Leibler divergence on marginals is in fact the expected KL divergence (on the marginal distribution $\mathcal{P}_\mathcal{Z}$) on the conditional distribution. Let $\rho^t, \rho^f$ be the densities of respectively $\mathcal{P}^t, \mathcal{P}^f$.
\begin{align}
    \label{lemma2a} \KL{\mathcal{P}^t}{\mathcal{P}^f}
    &= -\int_{\mathcal{Z}\times\mathcal{Y}} \rho^t_{\mathcal{Z},\mathcal{Y}}(z,y) \operatorname{log} \frac{\rho^f_{\mathcal{Z},\mathcal{Y}}(z,y)}{\rho^t_{\mathcal{Z},\mathcal{Y}}(z,y)}dydz\\
    \label{lemma2b} &= -\int_{\mathcal{Z}}\rho^t_{\mathcal{Z}}(z)\sum\limits_{\mathcal{Y}} \mathcal{P}^t(y|z) \operatorname{log} \frac{\rho^f_{\mathcal{Z}}(z)\mathcal{P}^f(y|z)}{\rho^t_{\mathcal{Z}}(z)\mathcal{P}^t(y|z)}dz\\
    \label{lemma2c} &= \mathbb{E}_{z\sim\mathcal{P}_\mathcal{Z}}\left[-\sum\limits_{\mathcal{Y}} \mathcal{P}^t(y|z) \operatorname{log} \frac{\rho^f_{\mathcal{Z}}(z)\mathcal{P}^f(y|z)}{\rho^t_{\mathcal{Z}}(z)\mathcal{P}^t(y|z)}\right]\\
    \label{lemma2d} &= \mathbb{E}_{z\sim\mathcal{P}_\mathcal{Z}}\left[-\sum\limits_{\mathcal{Y}} \mathcal{P}^t(y|z) \operatorname{log} \frac{\mathcal{P}^f(y|z)}{\mathcal{P}^t(y|z)}\right]\\
    \label{lemma2e} &= \mathbb{E}_{z\sim\mathcal{P}_\mathcal{Z}}\left[\KL{\mathcal{P}^t_{\mathcal{Y}|z}}{\mathcal{P}^f_{\mathcal{Y}|z}}\right]
\end{align}
The first line (Eq.\ \eqref{lemma2a}) is the definition of the Kullback-Leibler divergence with densities. Eq.\ \eqref{lemma2b} is an application of Fubini's theorem which allows us to decompose the double integral and a decomposition of joint probability into the product of marginal and conditional. Finally as $\mathcal{Y}$ is a discrete space, the integral becomes a sum of probabilities, and $\rho^t_\mathcal{Z}(z)$ is pulled out of the sum. Eq.\ \eqref{lemma2c} replaces the integral by the expectation, by definition. In Eq.\ \eqref{lemma2d}, since by definition $\rho^t_{\mathcal{Z}}(z)=\rho^f_{\mathcal{Z}}(z)$, those terms are removed from the fraction. Eq.\ \eqref{lemma2e} is again an application of the definition of the KL divergence.\\
By combining the equality obtained in Eq.\ \eqref{lemma2e} and the cited lemma, we complete the proof.
\end{proof}
\begin{theorem}
\label{th_final}
Given the Wasserstein distance's cost function $c$ is the metric on the product space $\mathcal{Z}\times\mathcal{Y}$, we get
\begin{equation}
    W(\mathcal{P}^t, \mathcal{Q}^t) \leq  (1-\alpha)W(\mathcal{P}^t,\mathcal{P}^f)
    + W(\mathcal{P},\mathcal{Q}) +  (1-\beta)W(\mathcal{Q}^t,\mathcal{Q}^f).
    \label{eq:theorem}
\end{equation}
\end{theorem}
\begin{proof}
\begin{align}
    \label{theorem1a} W(\mathcal{P}^t, \mathcal{Q}^t) & \leq W(\mathcal{P}^t, \mathcal{P}) + W(\mathcal{P}, \mathcal{Q}) + W(\mathcal{Q}, \mathcal{Q}^t)\\
    \label{theorem1c} & = (1-\alpha) W(\mathcal{P}^t, \mathcal{P}^f) + W(\mathcal{P}, \mathcal{Q}) + (1-\beta) W(\mathcal{Q}^t, \mathcal{Q}^f)
\end{align}
Eq.\ \eqref{theorem1a} is an application of the triangle inequality. Using the symmetry of the metric (and therefore of the Wasserstein distance), and applying Lemma \ref{lemma:wasserstein_mixture_decomposition} twice, we obtain Eq.\ \eqref{theorem1c}. 
\end{proof}
\begin{theorem}
\label{th_diff_err}
Let $\mathcal{Z},\mathcal{Y}$ be two compact measurable metric spaces whose product space has dimension $d$. Let $\mathcal{P}^t,\mathcal{Q}^t\in\mathcal{M}_{+}^{1}(\mathcal{Z}\times\mathcal{Y})$ two joint distributions associated to the two domains, and $\widehat{\mathcal{P}}^t,\widehat{\mathcal{Q}}^t$ their empirical counterparts. Let the transport cost function $c$ associated to the optimal transport problem be $c(z_1,y_1;z_2,y_2)=\norm{\veccol{z_1 \\ y_1}-\veccol{z_2 \\ y_2}}_2$, the Euclidean distance as the metric on $\mathcal{Z}\times\mathcal{Y}$ and $\mathcal{L}:\mathcal{Y}\times\mathcal{Y}\rightarrow\R^{+}$ a symmetric $\kappa$-Lipschitz loss function. Then for any $d^\prime > d$ and $\psi^\prime<\sqrt{2}$ there exists some constant $N_0$ depending on $d^\prime$ such that for any $\delta > 0$ and $\min(N_P, N_Q)\geq N_0\max(\delta^{-(d^\prime+2)},1)$ with probability at least $1-\delta$ for all $\lambda$-Lipschitz $f$ the following holds:
\begin{equation}
    \abs{\mathcal{R}_{\mathcal{Q}^t}(f)-\mathcal{R}_{\mathcal{P}^t}(f)} \leq \kappa\sqrt{(\lambda^2 +1)}\left[W_{1}(\widehat{\mathcal{P}}^t,\widehat{\mathcal{Q}}^t)+\sqrt{\frac{2}{\psi^{\prime}}\log \left(\frac{1}{\delta}\right)}\left(\sqrt{\frac{1}{N_{P}}}+\sqrt{\frac{1}{N_{Q}}}\right)\right].
\end{equation}
\label{theorem:gap}
\end{theorem}
\begin{proof}
Let $\pi^\ast\in\Pi(\mathcal{P}^t, \mathcal{Q}^t)$ be the optimal coupling. 
\begin{align}
    \label{theorem2a} \abs{\mathcal{R}_{\mathcal{Q}^t}(f)-\mathcal{R}_{\mathcal{P}^t}(f)} &= \abs{\mathbb{E}_{z,y\sim \mathcal{Q}^t}[\mathcal{L}(f(z), y)]-\mathbb{E}_{z,y\sim \mathcal{P}^t}[\mathcal{L}(f(z), y)]}\\
    \label{theorem2b} &= \abs{\int_{\mathcal{Z}\times\mathcal{Y}}\mathcal{L}(f(z),y)d\mathcal{Q}^t(z,y)-\int_{\mathcal{Z}\times\mathcal{Y}}\mathcal{L}(f(z),y)d\mathcal{P}^t(z,y)}\\
    \label{theorem2c} &= \abs{\int_{\mathcal{Z}\times\mathcal{Y}}\mathcal{L}(f(z),y)d(\mathcal{Q}^t-\mathcal{P}^t)(z,y)}\\
    \label{theorem2d} &= \abs{\int_{(\mathcal{Z}\times\mathcal{Y})^2}\mathcal{L}(f(z),y)-\mathcal{L}(f(z^{\prime}),y^{\prime})d\pi^{*}((z,y),(z^{\prime},y^{\prime}))}\\
    \label{theorem2e} &\leq \int_{(\mathcal{Z}\times\mathcal{Y})^2}\abs{\mathcal{L}(f(z),y)-\mathcal{L}(f(z^{\prime}),y^{\prime})}d\pi^{*}((z,y),(z^{\prime},y^{\prime}))\\
    \label{theorem2f} &= \int_{(\mathcal{Z}\times\mathcal{Y})^2}\abs{\mathcal{L}(f(z),y)-\mathcal{L}(f(z^{\prime}),y)\\&\qquad\qquad+\mathcal{L}(f(z^{\prime}),y)-\mathcal{L}(f(z^{\prime}),y^{\prime})}d\pi^{*}((z,y),(z^{\prime},y^{\prime}))\\
    \label{theorem2g} &\leq \int_{(\mathcal{Z}\times\mathcal{Y})^2}\abs{\mathcal{L}(f(z),y)-\mathcal{L}(f(z^{\prime}),y)}\\&\qquad\qquad+\abs{\mathcal{L}(f(z^{\prime}),y)-\mathcal{L}(f(z^{\prime}),y^{\prime})}d\pi^{*}((z,y),(z^{\prime},y^{\prime}))\\
    \label{theorem2h} &\leq \int_{(\mathcal{Z}\times\mathcal{Y})^2}\kappa\norm{f(z)-f(z^{\prime})}_2+\kappa\norm{y-y^{\prime}}_2 d\pi^{*}((z,y),(z^{\prime},y^{\prime}))\\
    \label{theorem2i} &\leq \int_{(\mathcal{Z}\times\mathcal{Y})^2}\kappa\lambda \norm{z-z^{\prime}}_2+\kappa\norm{y-y^{\prime}}_2 d\pi^{*}((z,y),(z^{\prime},y^{\prime}))\\ 
    \label{theorem2j} &\leq \int_{(\mathcal{Z}\times\mathcal{Y})^2} \kappa \sqrt{(\lambda^2 + 1)} \norm{\veccol{z \\ y}-\veccol{z^{\prime} \\ y^{\prime}}}_2 d\pi^{*}((z,y),(z^{\prime},y^{\prime}))\\
    \label{theorem2k} &= \kappa \sqrt{(\lambda^2 + 1)} \int_{(\mathcal{Z}\times\mathcal{Y})^2} \norm{\veccol{z \\ y}-\veccol{z^{\prime} \\ y^{\prime}}}_2 d\pi^{*}((z,y),(z^{\prime},y^{\prime}))\\
    \label{theorem2l} &= \kappa \sqrt{(\lambda^2 + 1)} W_1 (\mathcal{P}^t,\mathcal{Q}^t)
\end{align}
We start the demonstration by replacing definition by explicit formulations, in Eq.\ \eqref{theorem2a} and again in Eq.\ \eqref{theorem2b}. In Eq.\ \eqref{theorem2c} we replace a difference of integrals by the integral of the difference of measures, which leads to a form related to the dual of the Wasserstein distance. A consequence of the Kantorovich-Rubinstein duality theorem is that Eq.\ \eqref{theorem2c} and \eqref{theorem2d} are equal for the optimal coupling. The next equation, Eq.\ \eqref{theorem2e} is a property of the absolute value, namely $\abs{a+b}\leq\abs{a}+\abs{b}$. We then add two terms summing to zero in Eq.\ \eqref{theorem2f} and apply the same property of the absolute value again, to obtain Eq.\ \eqref{theorem2g}. Having the absolute value of the difference of a Lipschitz function for two values allows up to upper bound that difference by one on the inputs of the function, up to a Lipschitzness factor. We apply that operation on two terms to obtain Eq.\ \eqref{theorem2h} and again on the first term to obtain Eq.\ \eqref{theorem2i}. Using the Cauchy-Schwartz inequality, the sum of two Euclidean distances can be upper bounded by the Euclidean distance between the concatenated vectors as in Eq.\ \eqref{theorem2j}, which corresponds to the cost function used in our discriminator throughout the main paper. The next steps correspond to pulling out constant outside of the integral (Eq.\ \eqref{theorem2k}), and replacing the explicit formulation of the 1-Wasserstein distance by its notation.

We have completed the main part of the proof. The next step is to apply a classical concentration bound which allows us to replace the distributions in the Wasserstein distance by their empirical counterparts. We now reintroduce the concentration bound we use:
\begin{center}
\fbox{~~
\begin{minipage}{.85\textwidth}
\begin{unnumtheorem}{\textbf{From \cite{bolley2007quantitative}
}}
Let $\mu$ be a probability measure on $\R^d$ satisfying a $T_1(\psi)$ inequality and let $\widehat{\mu}^{N}:=\frac{1}{N} \sum_{i=1}^{N} \delta_{X^{i}}$ be its associated empirical measure. Then, for any $d^{\prime}>d$ and $\psi^{\prime}< \psi$, there exists some constant $N_0$, depending on $\psi^{\prime}$, $d^{\prime}$ and some square-exponential moment of $\mu$, such that for any $\epsilon>0$ and $N\geq N_0 \max(\epsilon^{-(d^{\prime}+2)},1)$,
\begin{equation}
\mathbb{P}\left[W_{1}\left(\mu, \widehat{\mu}^{N}\right)>\epsilon\right] \leq e^{-\frac{\psi^{\prime}}{2} N \epsilon^{2}}
\end{equation}
\end{unnumtheorem}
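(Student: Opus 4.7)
The plan is to derive a non-asymptotic concentration inequality for $W_1(\mu, \widehat{\mu}^N)$ under the $T_1(\psi)$ transport-entropy hypothesis. My starting point would be the Bobkov-Götze equivalence: $T_1(\psi)$ is equivalent to a Gaussian concentration statement, namely $\mathbb{P}[F - \mathbb{E}_\mu F > t] \le e^{-\psi t^2/2}$ for every 1-Lipschitz $F$. Coupled with the Kantorovich-Rubinstein duality $W_1(\mu, \widehat{\mu}^N) = \sup_{\|f\|_{Lip}\le 1}(\mathbb{E}_\mu f - \mathbb{E}_{\widehat{\mu}^N} f)$, this gives pointwise control of each test function but not the uniform control needed, since the supremum is over an infinite-dimensional class.

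To reduce to a finite union bound, I would first use the square-exponential moment guaranteed by $T_1(\psi)$ to restrict attention to a ball $B(0,R) \subset \R^d$ with $R$ chosen logarithmically large in $1/\epsilon$, so that the tail contribution of both $\mu$ and $\widehat{\mu}^N$ outside $B(0,R)$ is at most, say, $\epsilon/4$ with overwhelming probability. Within $B(0,R)$, I would partition $\R^d$ into dyadic cubes of side $2^{-k}$ at each scale $k$, and observe that $W_1(\mu, \widehat{\mu}^N)$ is bounded by a telescoping sum of the form $\sum_k 2^{-k} \sum_{C \in \mathcal{C}_k} |\mu(C) - \widehat{\mu}^N(C)|$, exploiting that mass inside a cube can be rearranged within the cube at transport cost at most its diameter.

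For each cube $C$, the quantity $\widehat{\mu}^N(C)$ is a Bernoulli empirical mean, and either a Chernoff-Bernstein bound or a direct application of $T_1(\psi)$ to a Lipschitz approximation of $\mathbf{1}_C$ yields $\mathbb{P}[|\widehat{\mu}^N(C) - \mu(C)| > u] \le 2e^{-c\psi N u^2}$. I would then apply a union bound over the $O(R^d 2^{kd})$ cubes at scale $k$, tune the per-scale tolerance $u_k$ so that the contributions $2^{-k} \sum_C u_k \lesssim \epsilon$ telescope correctly, and sum the failure probabilities to arrive at a single bound of the desired shape $e^{-\psi' N \epsilon^2/2}$.

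The hard part will be the careful scale balancing. At scale $k$ the entropy cost of the union bound is proportional to $2^{kd} \log 2$, which must be dominated by the concentration exponent $N \cdot 2^{-2k}\epsilon^2$; this forces $N \gtrsim \epsilon^{-(d+2)}$, where the $+2$ comes from the quadratic exponent in Bernstein and the gap $d' > d$ absorbs polylogarithmic losses from the truncation radius $R$, the number of active scales, and the cube-count prefactors. The constant $N_0$ collects the square-exponential moment constant of $\mu$ together with the Bernstein prefactors, while the strict inequality $\psi' < \psi$ leaves exactly the multiplicative slack needed to swallow these corrections into the clean Gaussian-tail conclusion.
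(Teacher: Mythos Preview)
The paper does not prove this statement at all: it is quoted verbatim as Theorem~1.1 of \cite{bolley2007quantitative} and invoked as a black box inside the proof of Theorem~\ref{th_diff_err}, with no argument supplied. Your proposal therefore goes far beyond what the paper does---you are sketching the actual proof of the Bolley--Guillin--Villani concentration estimate, whereas the paper simply cites it.

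As a sketch of that external result, your outline is broadly on target (truncation via the square-exponential moment implied by $T_1$, a multiscale/dyadic control of $W_1$, and a union bound balanced across scales to produce the $\epsilon^{-(d'+2)}$ threshold). Two remarks if you intend to flesh it out. First, applying Gaussian concentration ``to a Lipschitz approximation of $\mathbf{1}_C$'' is delicate because the Lipschitz constant of any useful approximation blows up as the cube shrinks; in practice one either uses a straight Bernstein/Hoeffding bound on the Bernoulli counts $\widehat{\mu}^N(C)$, or---more in the spirit of the original paper---one first controls the \emph{mean} $\mathbb{E}\,W_1(\mu,\widehat{\mu}^N)$ by the dyadic argument and then obtains fluctuations around the mean by observing that $(X_1,\dots,X_N)\mapsto W_1(\mu,\widehat{\mu}^N)$ is itself $1/N$-Lipschitz in each coordinate, so that tensorized $T_1$ applies directly. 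Second, the passage from $d$ to any $d'>d$ and from $\psi$ to any $\psi'<\psi$ is exactly, as you say, the slack that absorbs the logarithmic factors from the truncation radius and the number of scales; making this quantitative is where most of the bookkeeping lives.
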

\end{minipage}~~}
\end{center}
Now by using the triangle inequality, we can relate Wasserstein distances between empirical and true distributions,
\begin{equation}
    \kappa \sqrt{(\lambda^2 + 1)} W_1 (\mathcal{P}^t,\mathcal{Q}^t) \leq \kappa \sqrt{(\lambda^2 + 1)} \left( W_1 (\widehat{\mathcal{P}}^t,\mathcal{P}^t) + W_1 (\widehat{\mathcal{P}}^t,\widehat{\mathcal{Q}}^t) + W_1 (\mathcal{Q}^t,\widehat{\mathcal{Q}}^t) \right),
\end{equation}
and by applying the concentration bound twice, we obtain our final result and complete the proof:
\begin{equation}
    \abs{\mathcal{R}_{\mathcal{Q}^t}(f)-\mathcal{R}_{\mathcal{P}^t}(f)} \leq \kappa\sqrt{(\lambda^2 +1)}\left[W_{1}(\widehat{\mathcal{P}}^t,\widehat{\mathcal{Q}}^t)+\sqrt{\frac{2}{\psi^{\prime}}\log \left(\frac{1}{\delta}\right)}\left(\sqrt{\frac{1}{N_{P}}}+\sqrt{\frac{1}{N_{Q}}}\right)\right].
\end{equation}
\end{proof}

\section{Details of the Experiments of Section 6.2}
\label{appendix:experiments}

\paragraph{Hardware \& Computation} All experiments but PACS were conducted on a single RTX 2060 Super. The PACS experiments were conducted on a single TITAN RTX. All experiments were conducted on a desktop computer. Most experiments lasted between 1 and 3 hours and none more than 6 hours.

\paragraph{Implementation} Our model is implemented using pytorch \cite{NEURIPS20199015} and torchvision as framework, timm \cite{rw2019timm} and nfnets-pytorch \cite{nfnets2021pytorch} for access to normalizer-free networks, PythonOT \cite{flamary2021pot} to compute the Wasserstein distance reported in the tables. Our code is available at \url{https://github.com/leoandeol/ldir}. It contains everything necessary to reproduce experiments to the exception of the data itself, which can be easily obtained from the official sources.

\paragraph{Results of Table 1 of the main paper} We use a Cross-Entropy (equivalent to the Kullback-Leibler divergence in case of a deterministic labeling rule) classification loss with a weight of 1 and no regularization losses, and the domain critic also had a weight of 1. We use the Conv-Large network for SVHN from the VAT paper \citep{miyato2018virtual}, and simple data augmentation (small translations, and color jittering provided with pytorch) for all experiments.

\paragraph{Results of Table 2 of the main paper}
We use VAT with $\epsilon=2.5$ and $\xi=10$. VAT, conditional entropy, and classification losses all had a weight of 1, and the domain critic had a weight of 1. We do not use Virtual MixUp. Fine-Tuning (for classification) consists in one more epoch, without discriminator loss, and with the loss intuitively reweighted by the error of each domain (0.25 for MNIST to not forget, 0.75 for SVHN to improve). We use the Conv-Large network, and simple data augmentation (small translations, and color jittering provided with pytorch) for all experiments.

\paragraph{Results of Table 3 of the main paper} We used Decaf features \cite{donahue2014decaf} with a Resnet18 backbone \cite{he2016deep}. Except for this, the settings are identical as in Table 1.

\paragraph{Results of Table 4 of the main paper}
Unless stated otherwise, we use the same settings as Table 1. We use VAT with an adaptive radius such as introduced in \cite{hu2017learning} with the same parameters. We use a different data augmentation, relying on RandAugment \cite{cubuk2020randaugment} with $n=2$ and $m=5$.

\section{Details of the Analyses of Section 6.3}
\label{appendix:analyses}

In this note, we give details of the implementation of UMAP and LRP analyses performed on the learned representations and classifiers. We also provide further UMAP visualizations.

\subsection{Application of UMAP}

\paragraph{Implementation} To compute the two-dimensional UMAP embeddings of the learned representations, we used the official implementation of UMAP \cite{mcinnes2018umap}, with the Euclidean distance and a number of neighbors of 75. We kept all other parameters as default. 

\paragraph{Observations} In addition to the embeddings shown in the main paper, we show in Figure \ref{fig:representation_visualization} further embeddings corresponding to the networks presented in Table 2 of the main paper. Beyond the obvious improvement over untrained features, we observe that supervised and semi-supervised approaches in (b) and (c), extract class structure (visible as distinct clusters), but tend to not produce strongly domain-invariant representations. Our method (d) incorporates domain alignment in the objective and we observe a much stronger overlap between the red and blue points representing the MNIST and SVHN domains respectively. However, we have shown in the paper that a simple epoch of fine-tuning can lead to higher accuracies of classification but comes with lower domain alignment (e).

\begin{figure*}[ht!]
         \centering
         \parbox{.32\textwidth}{\centering (a) Untrained}~
         \parbox{.32\textwidth}{\centering (b) Supervised on Both}~
         \parbox{.32\textwidth}{\centering (c) Semi-sup\\(VAT and EntMin)}\\
        \includegraphics[width=.32\textwidth]{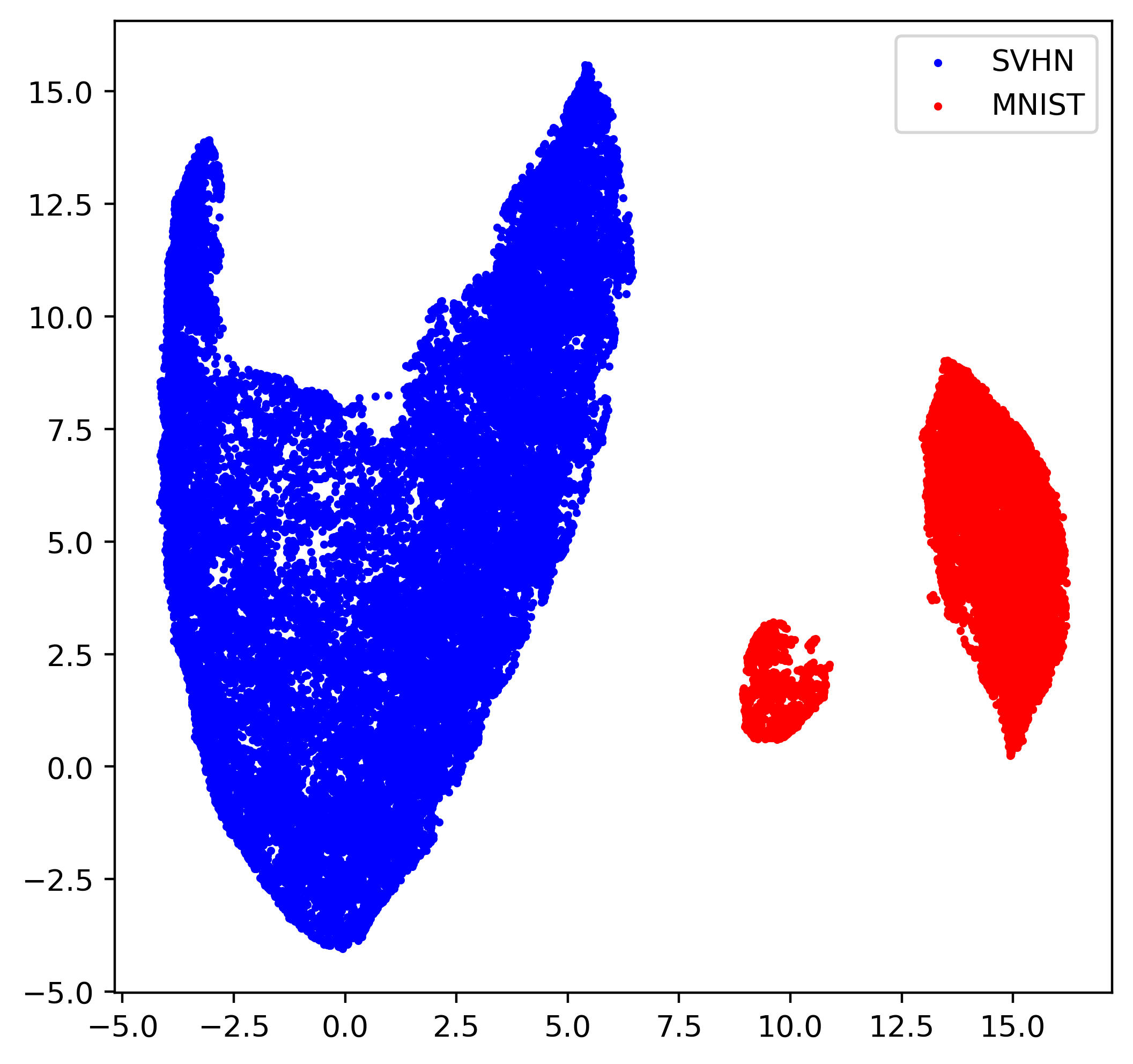}~
        \includegraphics[width=.32\textwidth]{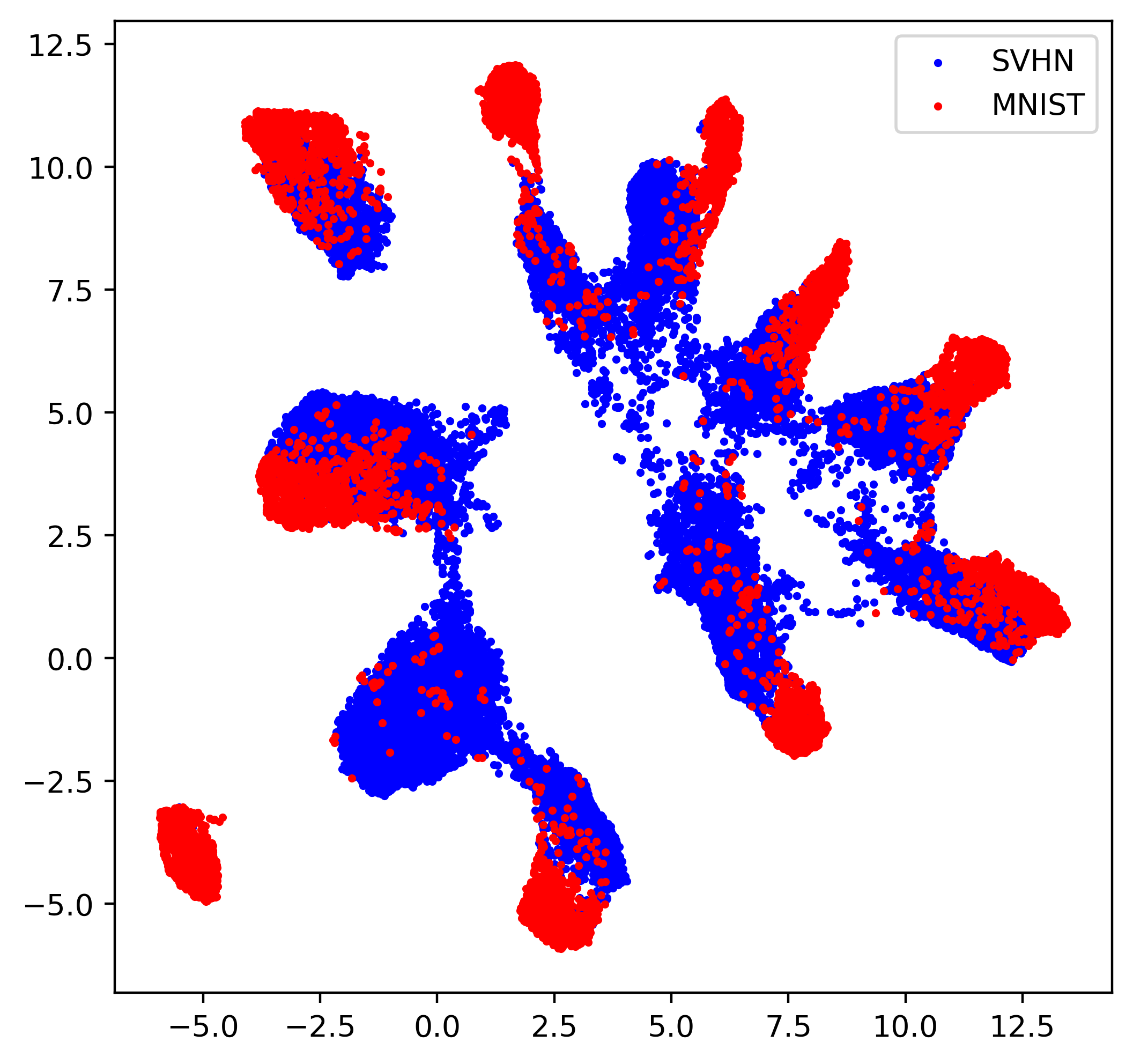}~
        \includegraphics[width=.32\textwidth]{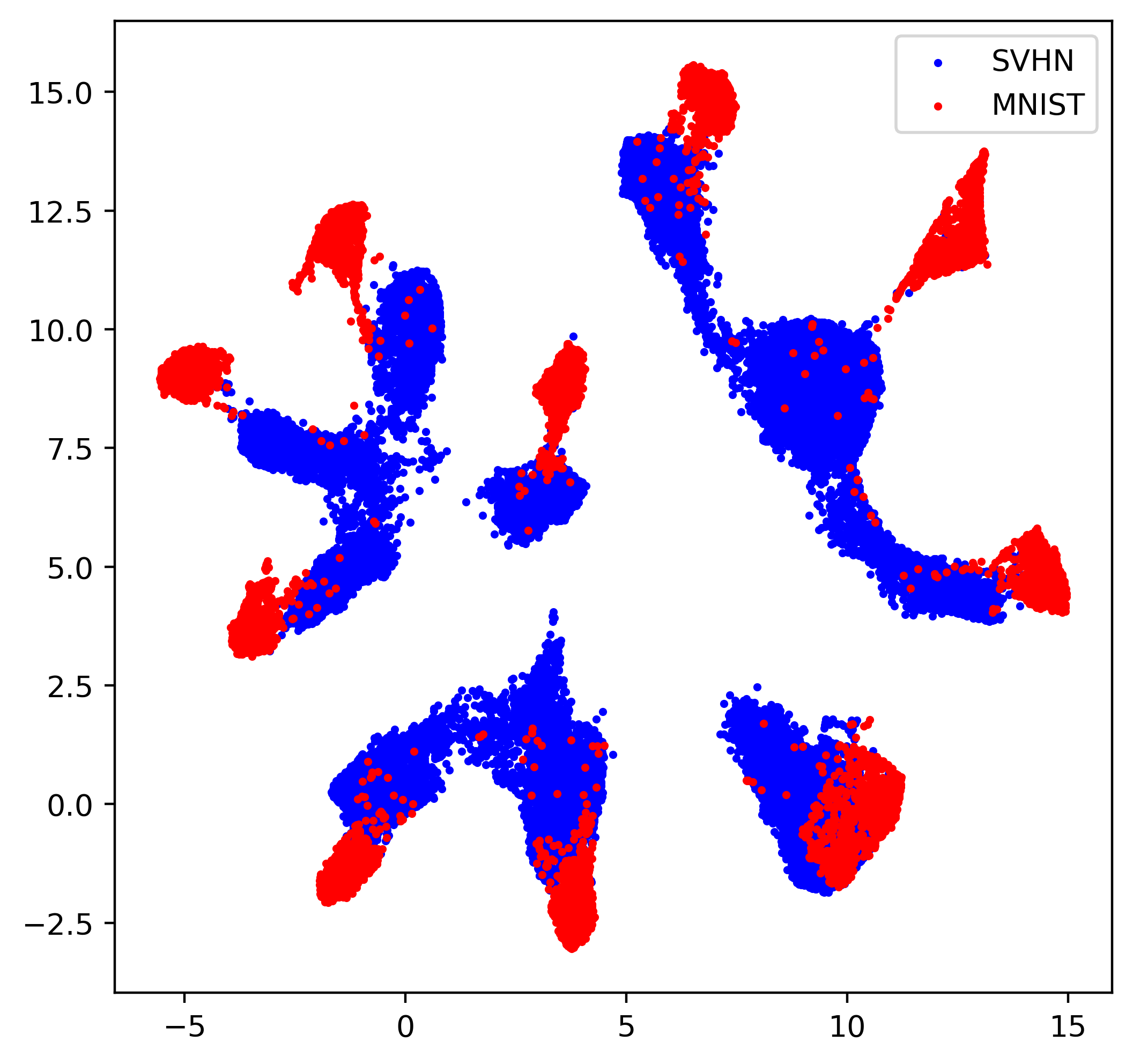}
        
            \medskip
            
            \parbox{.32\textwidth}{\centering (d) Ours (VAT + EntMin)}
            \parbox{.32\textwidth}{\centering (e) Ours (VAT + EntMin)\\+ Fine-Tuning}
            \includegraphics[width=.32\textwidth]{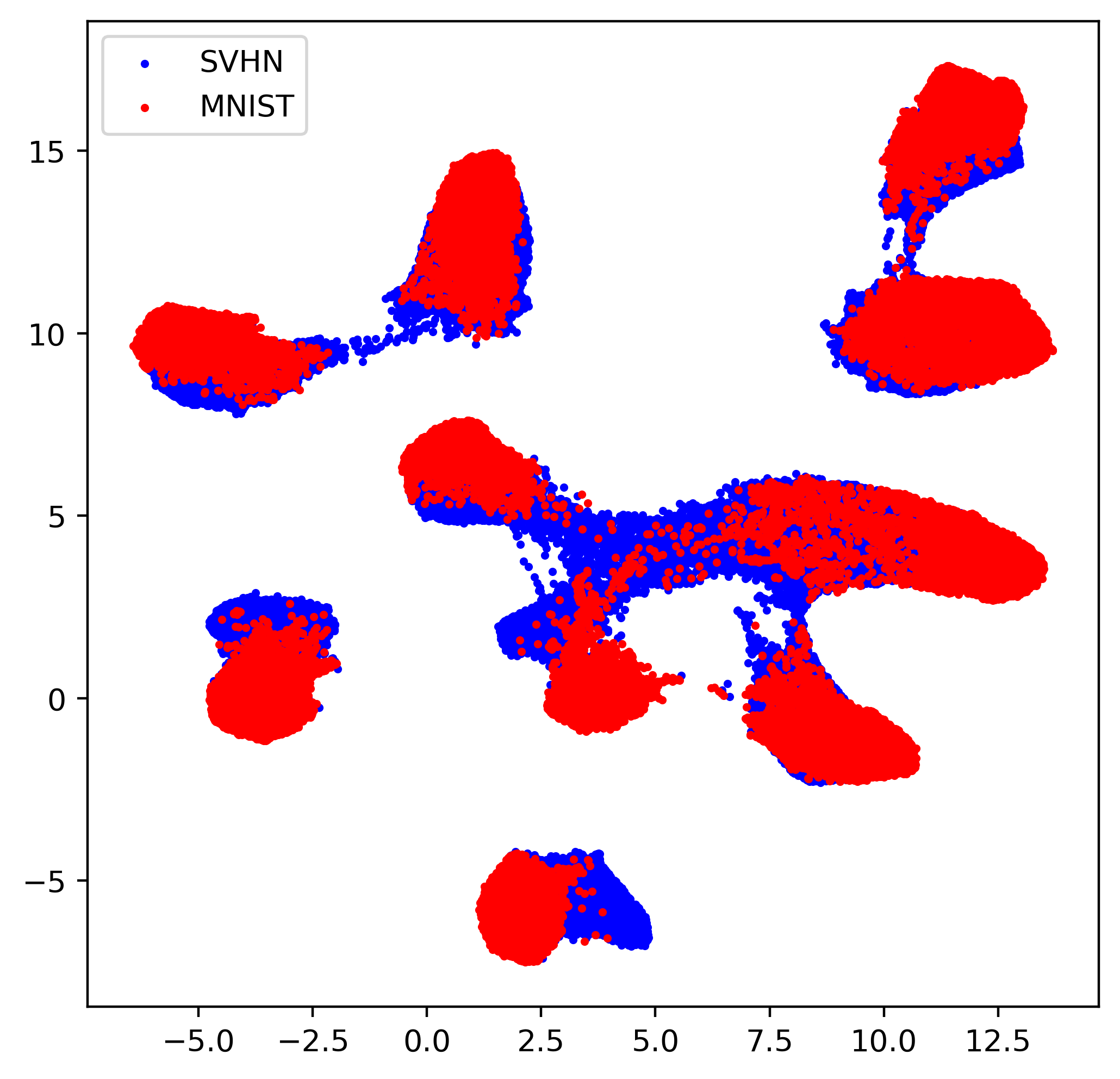}~
            \includegraphics[width=.32\textwidth]{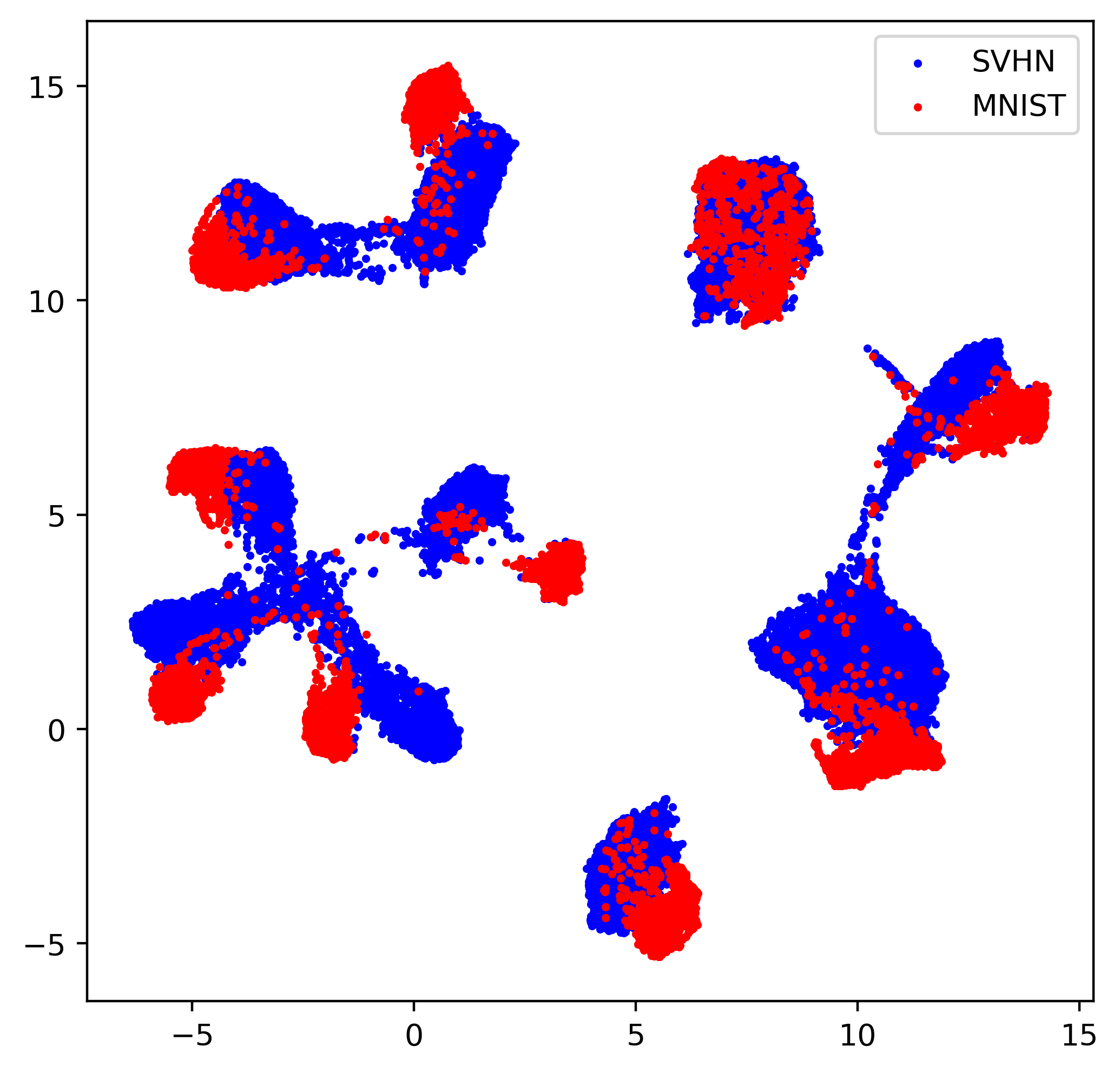}
         
            \caption{Visualization of the representations learned by the neural networks of Table 2 in the main paper. Representations are embedded using UMAP and the different examples are color-coded by domain (SVHN in blue, MNIST in red).}
    \label{fig:representation_visualization}
\end{figure*}

\subsection{Application of LRP}

Layer-wise Relevance Propagation (LRP) \cite{bach-plos15} is a technique to explain the decisions of neural network models in terms of the input features. It can serve to verify the decision strategy employed by a model, to compare two different models, or to visualize the effect of some learning parameters. LRP operates by reverse-propagating the output score, layer after layer, by means of propagation rules. Let $j$ and $k$ be indices for neurons in two consecutive layers of the network, and $\sum_j,\sum_k$ be sum over neurons in the respective layers. A propagation rule typically has the form
$$
R_j = \sum_k \frac{z_{jk}}{\epsilon_k + \sum_{j} z_{jk}} R_k
$$
where $R_j$ and $R_k$ are the `relevance' of neurons $j$ and $k$, where $z_{jk}$ quantifies the contribution of neuron $j$ to the activation of of neuron $k$, and where $\epsilon_k$ is an optional stabilization term. Various rules and heuristics to apply these rules have been proposed \cite{bach-plos15,DBLP:series/lncs/MontavonBLSM19,DBLP:journals/dsp/MontavonSM18}. The LRP procedure terminates once the propagation procedure has reached the input layer. The collection of relevance scores assigned to features in the input layer form the explanation.

When applying LRP to the Conv-Large architecture \cite{miyato2018virtual} considered in our experiments, the specific layers of that architecture need to be addressed by appropriate choice of propagation rules. The Conv-Large architecture is composed of an alternation of convolutions, batch-normalizations, Leaky ReLUs, and max-pooling functions. We follow the strategy proposed in \cite{DBLP:series/lncs/MontavonBLSM19} of fusing batch-normalization layers into the parameters of the adjacent convolution layers, so that we arrive at a simplified but functionally equivalent neural network which consists only of max-pooling layers and convolution-leakyReLU layers. For the max-pooling layers ($a_k = \max\{(a_j)_j\}$) we adopt the commonly used winner-take-all redistribution \cite{bach-plos15}, i.e.\ we redistribute the $R_k$ to the neuron in the pool that has the maximum activation. For the convolution-leakyReLU layers, we extend the LRP-$\gamma$ rule defined in \cite{DBLP:series/lncs/MontavonBLSM19} to account for negative input and output activations. Writing such layers as:
\begin{align*}
z_k &= \textstyle \sum_{0,j} a_j w_{jk}\\
a_k &= z_k \cdot I(z_k>0)\\[1mm]
& \qquad + \alpha z_k \cdot  I(z_k < 0)
\end{align*}
where the convolution is written as a generic weighted sum, where $\sum_{0,j}$ indicates that we sum over all input neurons plus a bias ($b_k = w_{0k}$ with $a_0=1$), and where $\alpha \in [0,1]$ is the leaky ReLU parameter, we define the rule:
\begin{align*}
R_j &= \sum_{k} \frac{a_j^+ \cdot (w_{jk} + \gamma w_{ij}^+)
+ a_j^- \cdot (w_{jk} + \gamma w_{jk}^-)
}{\sum_{0,j} a_j^+ \cdot (w_{jk} + \gamma w_{ij}^+)
+ a_j^- \cdot (w_{jk} + \gamma w_{jk}^-)}\cdot  I(z_k > 0)\cdot  R_k
\\& \qquad + \sum_{k} \frac{a_j^+ \cdot (w_{jk} + \gamma w_{ij}^-) + a_j^- \cdot (w_{jk} + \gamma w_{jk}^+)}{\sum_{0,j} a_j^+ \cdot (w_{jk} + \gamma w_{ij}^-) + a_j^- \cdot (w_{jk} + \gamma w_{jk}^+)} \cdot I(z_k < 0)\cdot  R_k
\end{align*}
where $(\cdot)^+$ and $(\cdot)^-$ are shortcut notations for $\max(0,\cdot)$ and $\min(0,\cdot)$. Like for the original LRP-$\gamma$ rule in \cite{DBLP:series/lncs/MontavonBLSM19}, this expanded LRP rule prioritizes input contributions that agree with neuron output, and $\gamma$ controls the degree of prioritization. Choosing the parameter $\gamma=0$ makes the LRP procedure equivalent to the simple Gradient$\,\times\,$Input method. Choosing $\gamma>0$ adds robustness to the explanation, and such robustness is especially needed in the first layers. Also, for the standard ReLU case ($\alpha=0$ at each layer), all activations become positive, and the proposed LRP rule reduces to the original LRP-$\gamma$ rule. In our analysis, we choose the parameter $\gamma = 0.25$ for the layers 1--17 and the parameter $\gamma= 0$ for the layers 18--35. Lastly, LRP response maps can be obtained by rewriting the relevance scores $(R_i)_i$ obtained in the input layer as $R_i = x_i c_i$ and returning the vector $(c_i)_i$ instead of the usual relevance scores.

\section{Extension of Theory to More than Two Domains}
\label{appendix:multidomaintheory}
We consider two or more domains $\{\mathcal{D}_1, \cdots, \mathcal{D}_N:N\geq2\}$ in which labeled data and
unlabeled data are observed.
On each domain $\mathcal{D}_i(i=1,\cdots,N)$, we observe labeled data sampled from $\mathcal{P}_i^t$ with a proportion $\alpha_i\in[0,1]$, and then unlabeled data from the marginal distribution ${\mathcal{P}_i^t}_{\mathcal{Z}}$, and by using a function $f_i:\mathcal{Z}\rightarrow\mathcal{Y}$, such as a neural network classifier, we obtain an estimate of the true distribution $\mathcal{P}_i^{f_i}=(z, f_{i}(z))_{z\sim {\mathcal{P}_i^t}_{\mathcal{Z}}}$. Therefore the distribution that we observe samples from is a mixture $\mathcal{P}_i=\alpha_i \mathcal{P}_i^t + (1-\alpha_i) \mathcal{P}_i^{f_i}$. Note that the classifiers $f_i$ need not be identical, and the proportion $\alpha_i$ on each domain $D_i$ may be also different. In addition, the overall distribution integrating all domains $\mathcal{P} = \sum_{i=1}^N q_i \mathcal{P}_i$ is obtained by mixture weights $q_i$ satisfying $\sum_{i=1}^N q_i = 1$. Note that the overall true distribution on all domains are described as $\mathcal{P}^t = \sum_{i=1}^N \frac{q_i \alpha_i}{\sum q_i \alpha_i} \mathcal{P}_i^t$, and by using $f_i$, we obtain an estimate of the true distribution on all domains $P^f=\sum_{i=1}^N \frac{q_i (1-\alpha_i)}{\sum q_i (1-\alpha_i)} \mathcal{P}_i^{f_i}$, and then $\mathcal{P} = (\sum_{i=1}^N q_i \alpha_i) \mathcal{P}^t + (\sum_{i=1}^N q_i (1-\alpha_i)) \mathcal{P}^f$.

\begin{lemma}
\label{ieq1:wasserstein_true}
Assuming that for all $i \in \{1,\cdots,N\}$, $\mathcal{P}_i^t$ and $P^t$ are densities , then we obtain
\begin{equation}
    \begin{split}
        W_1(\mathcal{P}_i^t, \mathcal{P}^t)
        &\leq \sum_{j=1}^N \frac{q_j \alpha_j}{\sum_{k=1}^N q_k \alpha_k}\left((1-\alpha_i)W_1(\mathcal{P}_i^t, \mathcal{P}_i^f) + W_1(\mathcal{P}_i, \mathcal{P}_j) + (1-\alpha_j)W_1(\mathcal{P}_j^t, \mathcal{P}_j^f)\right).
    \end{split}
\end{equation}
\end{lemma}
\begin{proof}
\begin{equation}
    \begin{split}
        W_1(\mathcal{P}_i^t, \mathcal{P}^t)
        &= W_1(\mathcal{P}_i^t, \sum_{j=1}^N \frac{q_j \alpha_j}{\sum_{k=1}^N q_k \alpha_k} \mathcal{P}_j^t) \\
        &\leq \sum_{j=1}^N \frac{q_j \alpha_j}{\sum_{k=1}^N q_k \alpha_k}W_1(\mathcal{P}_i^t, \mathcal{P}_j^t)\\
        &\leq \sum_{j=1}^N \frac{q_j \alpha_j}{\sum_{k=1}^N q_k \alpha_k}\left((1-\alpha_i)W_1(\mathcal{P}_i^t, \mathcal{P}_i^f) + W_1(\mathcal{P}_i, \mathcal{P}_j) + (1-\alpha_j)W_1(\mathcal{P}_j^t, \mathcal{P}_j^f)\right).
    \end{split}
\end{equation}
\end{proof}

\begin{lemma}
\label{ieq2:wasserstein_true}
Assuming that for all $i \in \{1,\cdots,N\}$, $\mathcal{P}_i^t$ and $\mathcal{P}^t$ are densities, then we obtain
\begin{equation}
    \forall i \in \{1,\cdots,N\}, W_1(\mathcal{P}_i^t, \mathcal{P}^t) \leq (1-\alpha_i)W_1(\mathcal{P}_i^t,\mathcal{P}_i^f) + W_1(\mathcal{P}_i, \mathcal{P}) + (1-\sum_{j=1}^N q_j\alpha_j)W_1(\mathcal{P}^t, \mathcal{P}^f).
\end{equation}
\end{lemma}
\begin{proof}
Use the triangle inequality and Lemma \ref{lemma:wasserstein_mixture_decomposition}.
\end{proof}

From Lemma \ref{ieq1:wasserstein_true} and Lemma \ref{ieq2:wasserstein_true}, we minimize the Wasserstein distance $W_1(\mathcal{P}_i,\mathcal{P}_j)$ or $W_1(\mathcal{P}_i,\mathcal{P})$ instead of the Wasserstein distance $W_1(\mathcal{P}_i^t, \mathcal{P}^t)$. To calculate $W_1(\mathcal{P}_i,\mathcal{P}_j)$ in Lemma \ref{ieq1:wasserstein_true}, we need $\mathrm{C}_2^N$ discriminators. However to calculate $W_1(\mathcal{P}_i,\mathcal{P})$ in Lemma \ref{ieq2:wasserstein_true}, we need only $N$ discriminators. The following theorem is discussed assuming that Lemma \ref{ieq2:wasserstein_true} is used.

\begin{lemma}
\label{le_diff_err_morethan2}
\begin{equation}
    \forall i \in \{1,\cdots, N\},  \abs{\mathcal{R}_{\mathcal{P}^t_i}(f)-\mathcal{R}_{\mathcal{P}^t}(f)} \leq W_{1}(\mathcal{P}^t,\mathcal{P}^t_i).
\end{equation}
\end{lemma}
\begin{proof}
Use Theorem \ref{th_diff_err}.
\end{proof}

From Lemma \ref{le_diff_err_morethan2}, we obtain for all $i \in \{1,\cdots,N\}$,  $\mathcal{R}_{\mathcal{P}^t_i}(f) \leq W_1(\mathcal{P}^t, \mathcal{P}^t_i) + \mathcal{R}_{\mathcal{P}^t}(f)$ and $\mathcal{R}_{\mathcal{P}^t}(f)\leq W_1(\mathcal{P}^t, \mathcal{P}^t_i) + \mathcal{R}_{\mathcal{P}^t_i}(f) $.

\begin{theorem}
\begin{equation}
    \abs{\max_i\{\mathcal{R}_{\mathcal{P}^t_i}(f)\}-\min_i\{\mathcal{R}_{\mathcal{P}^t_i}(f)\}}\leq  2 \max_i\{W_{1}(\mathcal{P}^t,\mathcal{P}^t_i)\}.
\end{equation}
\end{theorem}
\begin{proof}
\begin{equation}
    \begin{split}
        \forall j \in \{1,\cdots, N \},\max_i\{\mathcal{R}_{\mathcal{P}^t_i}(f)\}&\leq \max_i\{\mathcal{R}_{\mathcal{P}^t}(f)+W_{1}(\mathcal{P}^t,\mathcal{P}^t_i)\}\\
        &= \mathcal{R}_{\mathcal{P}^t}(f) + \max_i\{W_{1}(\mathcal{P}^t,\mathcal{P}^t_i)\}\\
        &\leq \mathcal{R}_{\mathcal{P}^t_j}(f) + W_1(\mathcal{P}^t, \mathcal{P}^t_j) + \max_i\{W_{1}(\mathcal{P}^t,\mathcal{P}^t_i)\}\\
        &\leq \mathcal{R}_{\mathcal{P}^t_j}(f) + 2 \max_i\{W_{1}(\mathcal{P}^t,\mathcal{P}^t_i)\}.
    \end{split}
\end{equation}
Thus, we obtain 
\begin{equation}
    \abs{\max_i\{\mathcal{R}_{\mathcal{P}^t_i}(f)\}-\min_i\{\mathcal{R}_{\mathcal{P}^t_i}(f)\}}\leq  2 \max_i\{W_{1}(\mathcal{P}^t,\mathcal{P}^t_i)\}.
\end{equation}
\end{proof}

{\small
\bibliographystyle{abbrv}
\bibliography{references}
}